\definecolor{cvprblue}{rgb}{0.21,0.49,0.74}
\def\sysname{Multi-MaP}
\newtheorem{thm}{Theorem}
\newtheorem{cor}[thm]{Corollary}
\title{Multi-Modal Proxy Learning Towards Personalized Visual Multiple Clustering}
\author{Jiawei Yao$^1$ \quad Qi Qian$^{2}$ \quad Juhua Hu$^1$\thanks{Corresponding author}\\
$^1$ School of Engineering and Technology, University of Washington, Tacoma, WA 98402, USA\\
$^2$ Alibaba Group, Bellevue, WA 98004, USA\\
{\tt\small \{jwyao, juhuah\}@uw.edu, qi.qian@alibaba-inc.com}
}
\begin{document}
\maketitle

\begin{abstract}
Multiple clustering has gained significant attention in recent years due to its potential to reveal multiple hidden structures of data from different perspectives. The advent of deep multiple clustering techniques has notably advanced the performance by uncovering complex patterns and relationships within large datasets. However, a major challenge arises as users often do not need all the clusterings that algorithms generate, and figuring out the one needed requires a substantial understanding of each clustering result. Traditionally, aligning a user's brief keyword of interest with the corresponding vision components was challenging, but the emergence of multi-modal and large language models (LLMs) has begun to bridge this gap. In response, given unlabeled target visual data, we propose \sysname{}, a novel method employing a multi-modal proxy learning process. It leverages CLIP encoders to extract coherent text and image embeddings, with GPT-4 integrating users' interests to formulate effective textual contexts. Moreover, reference word constraint and concept-level constraint are designed to learn the optimal text proxy according to the user’s interest. \sysname{} not only adeptly captures a user's interest via a keyword but also facilitates identifying relevant clusterings. Our extensive experiments show that \sysname{} consistently outperforms state-of-the-art methods in all benchmark multi-clustering vision tasks. Our code is available at \href{https://github.com/Alexander-Yao/Multi-MaP}{https://github.com/Alexander-Yao/Multi-MaP}.

\end{abstract}    
\vspace{-0.3cm}
\section{Introduction}
\label{sec:intro}

Clustering, which groups data points based on their similarities, has been extensively researched, since huge amount of unlabeled data are becoming more and more available.
Traditional methods~\cite{macqueen1967some, ng2001spectral, bishop2006pattern} exploit general-purpose handcrafted features that are not always ideal for specific tasks. Recently, deep clustering algorithms~\cite{xie2016unsupervised,guerin2018improving, qian2022unsupervised} leverage Deep Neural Networks (DNNs) significantly improve performance. However, most algorithms yield a single data partition, while data can exhibit multiple aspects (e.g., color and species as shown in Fig.~\ref{fig:intro1}). Traditional multiple clustering algorithms~\cite{bae2006coala, hu2017finding} have been developed to generate different partitions for varying applications, demonstrating the ability to identify multiple distinct clusterings from a dataset. Contemporary advancements in the field reveal a growing inclination among researchers to integrate deep learning methodologies for facilitating multiple clustering outcomes. Predominantly, such techniques capitalize on auto-encoders and data augmentation processes to capture a broad spectrum of feature dimensions, thereby enhancing the performance of multiple clustering~\cite{miklautz2020deep,ren2022diversified,DBLP:conf/inns-dlia/YaoLRH23}.

\begin{figure}[t]
    \centering
    \includegraphics[width=\columnwidth]{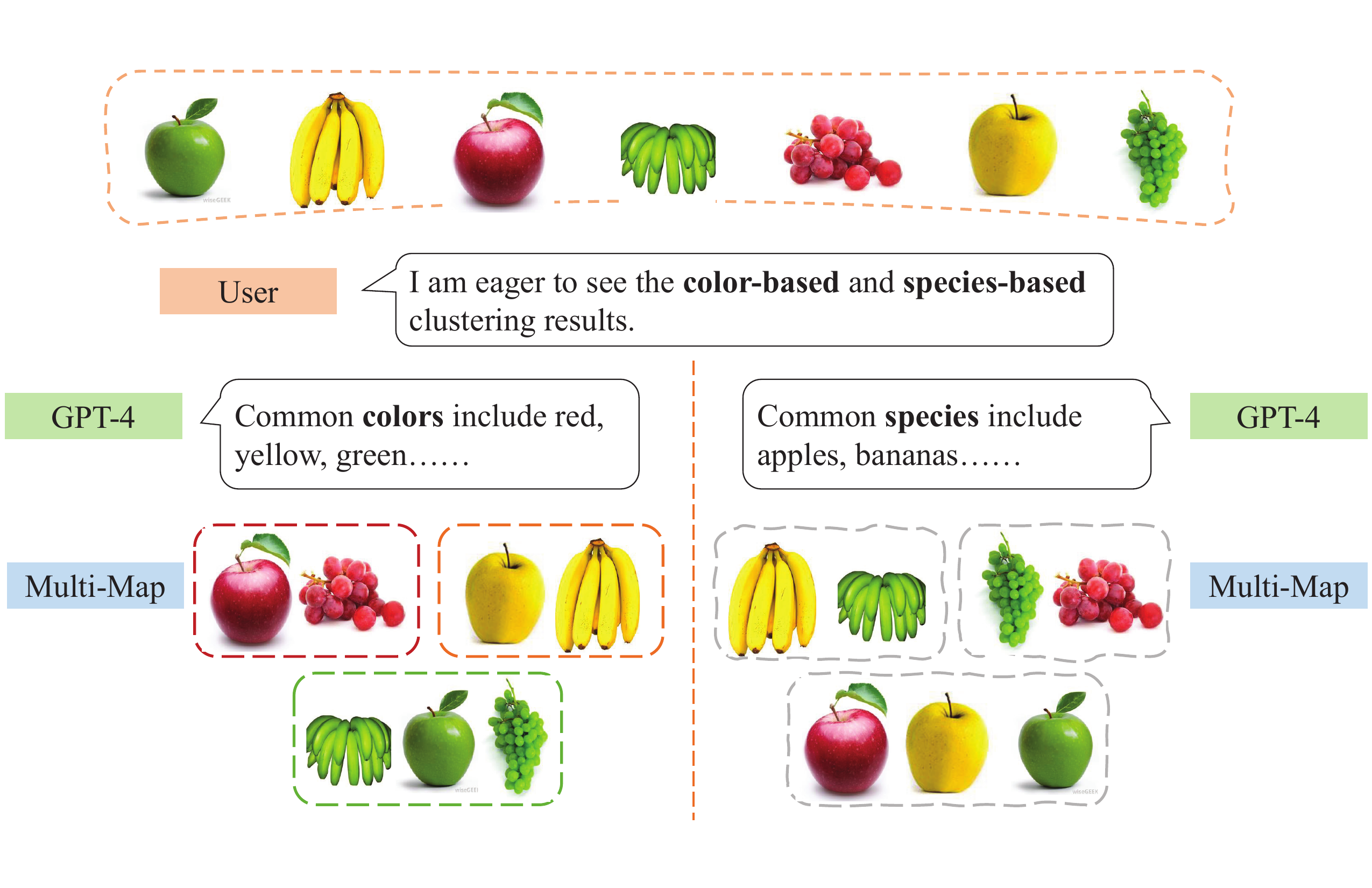}
    \vspace{-1.2cm}
    \caption{The flow chart of \sysname{}. \sysname{} obtains multiple clustering results based on the high-level concepts from users and the reference words from GPT-4.}
    \label{fig:intro1}
    \vspace{-0.5cm}
\end{figure}

However, a common issue arises as users often do not require all the clusterings generated by the algorithm, and identifying the relevant ones necessitates a substantial understanding of each clustering result. Therefore, in this work, we initiate an exploration into a method that is adept at accurately capturing and reflecting a user's interest. Users typically express their interests through concise keywords (e.g., color or species), and aligning these with different visual components precisely is challenging. Fortunately, the advent of multi-modal models like CLIP~\cite{radford2021learning} that aligns images to their corresponding text descriptions, can be helpful to fill this gap. However, unlike methods that employ labeled data to fine-tune pre-trained models~\cite{gao2023clip,wang2023improved}, multiple clustering deals with environments marked by vague or undefined label categories and amounts. Consequently, given only a high-level concept from the user, it is infeasible to fine-tune the pre-trained models to capture a specific aspect of the data, without the detailed labels corresponding to the user's concept.

An intuitive strategy to integrate pre-trained models into clustering is the zero-shot feature extraction, followed by clustering of the resultant embeddings. However, this approach exhibits limitations, particularly in capturing the interests of users within the dataset. Taking the multi-modal model CLIP~\cite{radford2021learning} as an example, when feeding image data into CLIP, regardless of what aspects of clustering the user expects, CLIP can only produce the same embeddings. Even considering the scenario that different pre-trained models can capture different aspects of the same data as in ~\cite{guerin2018improving}, it is hard to tell which one matches a user's preference. Fortunately, given CLIP's ability to model image-text pairs collaboratively, we can use a user's high-level concept to trigger the corresponding feature extraction from the pre-trained encoders from CLIP. However, no previous work has studied if CLIP has this potential to uncover different aspects of images, which is the focus of this work.

Specifically, we propose to integrate a user's high-level concept describing the preference using a personalized textual prompt. For example, if a user's focus pertains to the color dimension of fruit, a prospective prompt might be formulated as ``a fruit with the color of *'', wherein the ``*'' placeholder represents the proxy word awaiting determination using the knowledge in CLIP. Thereafter, we can learn the proxy word embedding by maximizing the similarity between the image and text embedding. However, the proxy word embedding is now searched in a continuous space while the original CLIP used discrete tokens, which can downgrade the performance. We prove that the performance can be well guaranteed by selecting the nearest token as the reference, which is unavailable in a clustering task.

Fortunately, we can use the user's high-level concept as a reference, which however covers a broad range of tokens under its scope. Therefore, we propose to leverage large language models like GPT-4 to generate candidate tokens using the user's high-level concept, in which the closest can be used as a closer reference token. Furthermore, in some scenarios, users may provide multiple concepts to obtain multiple clusterings simultaneously as shown in Fig.~\ref{fig:intro1}, we can also introduce a negative loss with these constrastive concepts to further enhance the learning. Therefore, to capture a user's specific interest and discover personalized clustering structure hidden in the data, we propose a multi-modal proxy learning method (\sysname{}). \sysname{} incorporates both text prompts and unlabeled images from the clustering task, and leverages CLIP to acquire their respective personalized representations using both reference word and concept-level constraints.~The contributions of this work can be summarized as:
\begin{itemize}
    \item We are the first to explore a deep multiple clustering method that precisely captures a user's interest(s) and generates personalized clustering(s) accordingly. 
    
    \item We propose a novel multi-modal proxy learning method, \sysname{}, by leveraging the text and image encoders pre-trained by CLIP, where the user's interest can be captured by the personalized text prompts. 
    
    \item Considering the challenge of learning a word proxy in a continuous space while tokens in CLIP were discrete, we theoretically prove that a close reference token can help constrain the search, which motivates the proposed reference word constraint and concept-level constraint. 
 
    \item We conduct extensive experiments on all publicly available visual multiple clustering tasks, which empirically demonstrates the superiority of the proposed \sysname{}, with a precise capturing of a user's interest.

    \item Finally, to the best of our knowledge, we are the first who demonstrate that CLIP can uncover different semantic aspects of images.
\end{itemize}
\section{Related Work}
\label{sec:formatting}
In this section, we briefly review two related directions, that is, multiple clustering and multi-modal models.

\subsection{Multiple Clustering}
%The data being analyzed could contain natural alternative perspectives. 
Multiple clustering, as a kind of method that can discover alternative perspectives of data, has attracted considerable attention.
Traditional methods for multiple clustering~\cite{hu2018subspace} use shallow models to find different ways of grouping data. Some of these methods rely on constraints to produce alternative clusterings. For instance, COALA~\cite{bae2006coala} uses the objects in an existing clustering as constraints for creating a new clustering and Qi et al.~\cite{qi2009principled} formulated multiple clustering as a constrained optimization problem. % and solved it to obtain a new clustering. 
Other methods exploit different feature subspaces to generate multiple clusterings. For example, Hu et al.~\cite{hu2017finding} found multiple clusterings by maximizing the eigengap in different subspaces. MNMF~\cite{yang2017non} adds the inner product of similarity matrices as a regularization term to find multiple clusterings. Some methods also use information theory to generate multiple clusterings. Gondek~\cite{gondek2003conditional} applied conditional information bottleneck and
Dang~\cite{dang2010generation} used an expectation maximization framework to optimize mutual information. % and find multiple clusterings.

Recently, some methods have used DNNs to find multiple clusterings and achieved better results. Wei et al.~\cite{wei2020multi} proposed a deep matrix factorization based method that uses multi-view data to find multiple clusterings. ENRC~\cite{miklautz2020deep} uses an auto-encoder to learn object features and finds multiple clusterings by optimizing a clustering objective function. iMClusts~\cite{ren2022diversified} exploits auto-encoders and multi-head attention to learn features from different perspectives, and then find multiple clusterings. AugDMC~\cite{DBLP:conf/inns-dlia/YaoLRH23} leverages data augmentation to generate diverse aspects of images and learns the representations to discover multiple clusterings. Although existing deep multiple clustering methods have achieved remarkable results, they require users to exert considerable efforts to select the correct clustering they need. In this work, we aim to efficiently and effectively capture users' interests using short keywords and provide clustering results accordingly.

\vspace{-0.1cm}

\subsection{Multi-modal Model}
\vspace{-0.1cm}
Multi-modal learning refers to the process of learning representations from different types of input modalities, such as image data, text, or speech. As related, we focus on how vision models can benefit from natural language supervision. %These models are expected to be more powerful than models that only use manual labels because they have more information in the training data. 
CLIP~\cite{radford2021learning} is a notable model, which is trained with a dataset containing 400 million text-image pairs from the internet. The objective is to align images to their corresponding text using contrastive learning. 

Fine-tuning approaches adapt vision-language models like CLIP to specific downstream image recognition tasks. CoOp~\cite{zhou2022learning} and CLIP-Adapter~\cite{gao2023clip} exemplify this, with the latter integrating residual style feature blending to enhance performance on various visual classification tasks. Additionally, insights from TeS~\cite{wang2023improved}, further elucidate the effectiveness of fine-tuning strategies in leveraging natural language supervision for enhanced visual understanding. Recognizing the scarcity of labeled data for various tasks, significant research efforts have been dedicated to enhancing zero-shot learning. Some approaches extend beyond CLIP by incorporating other large pre-trained models. For instance, VisDesc~\cite{menon2022visual} harnesses the power of GPT-3 to generate comprehensive contextual descriptions corresponding to given class names, thereby demonstrating superior performance compared to CLIP's basic prompts. UPL~\cite{huang2022unsupervised} and TPT~\cite{shu2022test} leverages unlabeled data to optimize learnable input text prompts. InMaP~\cite{inmap} recovers the proxy of each class in the vision space with the help from the text proxy. All of these methods aim to improve the performance of vision classification tasks, while clustering is a different scenario that we do not have class names that we can exploit to extract useful information from CLIP. In this work, we consider leveraging the CLIP encoder to extract coherent text and image embeddings, and using GPT-4 to integrate users' interests for the purpose of multiple clustering.
\vspace{-0.1cm}

\vspace{-0.1cm}
\section{The Proposed Method}
\vspace{-0.2cm}

In this section, we first briefly review the training objective in CLIP as follows, and then describe the details of our proposed method based on that.
\begin{figure*}[t]
    \centering
    \includegraphics[width=0.9\textwidth]{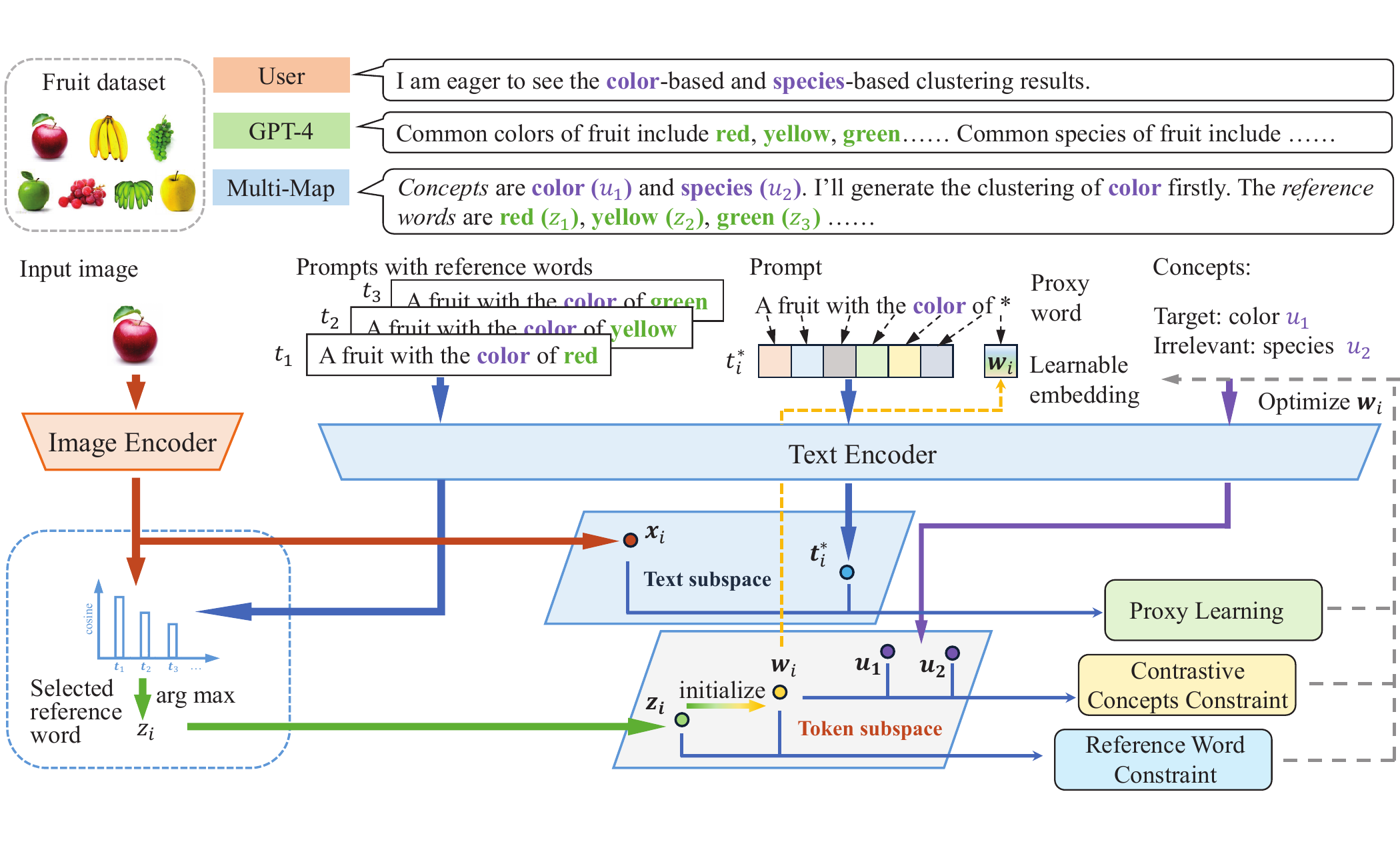}
    \vspace{-0.9cm}
    \caption{\textbf{\sysname{} framework.} In the training process of \sysname{}, the vision and text encoders are frozen and the proxy word embeddings $\mathbf{w}_i$ are learnable. Specifically, it first constructs the prompt embeddings based on the reference words provided by GPT-4 using a user's high-level concept, and then selects a reference word $z_i$ for each image according to the similarity between the prompt embeddings $\mathbf{t}_i$ and the image embeddings $\mathbf{x}_i$. Then, it combines the prompt and the reference words to form the new prompt embeddings $\mathbf{t}_i^*$ and maximizes the similarity to the image representation, so the proxy word embeddings $\mathbf{w}_i$ can capture the desired image features. In addition, the proxy word embeddings $\mathbf{w}_i$ should be close to the target concept word $\mathbf{u_1}$ and the selected reference word $\mathbf{z}_i$ to construct the concept-level constraint and reference word constraint, which capture the features related to the user's interest.}
    \label{fig:intro}
    \vspace{-0.5cm}
\end{figure*}

\subsection{Multi-modal Pre-training}

Given a set of image-text pairs as $\{x_i, t_i\}_{i=1}^n$, where $x_i$ is an image and $t_i$ is the corresponding text description, their vision and text representations can be obtained by two encoders as $\mathbf{x}_i = f(x_i)$ and $\mathbf{t}_i = h(t_i)$. $f(\cdot)$ and $h(\cdot)$ are vision and text encoders for optimization, where $\mathbf{x}_i$ and $\mathbf{t}_i$ have the unit norm. Then, these two encoders can be learned by minimizing the contrastive loss as
\[\min_{f,h} \sum_i -\log\frac{\exp(\mathbf{x}_i^\top \mathbf{t}_i/\tau)}{\sum_j \exp(\mathbf{x}_i^\top \mathbf{t}_j)/\tau)} - \log\frac{\exp(\mathbf{t}_i^\top \mathbf{x}_i/\tau)}{\sum_j \exp(\mathbf{t}_i^\top \mathbf{x}_j)/\tau)}\]
where $\tau$ is the temperature. 

This contrastive loss aims to pull the image and its description together while pushing away the irrelevant text~\cite{softtriple}, which enables the emerging multi-modal applications, e.g., zero-shot transfer~\cite{radford2021learning,inmap}, text-to-image generation~\cite{sdiffusion}, etc. 

\subsection{Multi-modal Proxy Learning}\label{sec:method_proxy}
Given the pre-trained vision and text encoders from CLIP, this work takes one step further to investigate if we can extract user-specific information from the alignment between images and text.

Concretely, given an image of fruit~\cite{hu2017finding} as shown in Fig.~\ref{fig:intro}, some users may be interested in only one specific property of the object, e.g., color. In this scenario, applying the vision encoder to extract the representation for the whole image can miss the preference of users. To mitigate the problem, we propose to explore the proxy representation from the image with the guidance from the text using users' preference, named \underline{Multi}-\underline{M}od\underline{a}l \underline{P}roxy learning (\sysname{}).

Recall that CLIP is pre-trained by images and text descriptions, where the text prompt is ``a photo of a fruit'' for an image containing ``fruit''. Now given a user's preference (e.g., color), we can rewrite the prompt as ``fruit with the color of *'' denoted by $t_i^*$ for image $x_i$, where ``*'' is the proxy word and its token embedding is $\mathbf{w}_i$ that is learnable. Then, we can align image and text prompt representations to obtain the appropriate proxy embedding for a user's interest. Since there is no negative pairs, only the similarity between positive pairs can be optimized as
\begin{eqnarray}\label{eq:cos_loss}
\mathcal{L}(\mathbf{w}_i) = - \langle f(x_i), h(t_i^*)\rangle
\end{eqnarray}
where the vision and text encoders are frozen, and $\mathbf{w}_i$ is the only variable for learning as the representation of the proxy word. By maximizing the similarity to the image representation, \sysname{} aims to learn the optimal text proxy according to the user's interest. 

However, it should be noted that the text encoder was pre-trained with discrete text tokens, while the domain of $\mathbf{w}_i$ in Eqn.~\ref{eq:cos_loss} is unconstrained. Therefore, the text representation extracted from the frozen text encoder can be inaccurate for $\mathbf{w}_i$ that degenerates the performance, which is demonstrated in the following theory.

For the sake of simplicity, we assume $h'(t)\in\mathbb{R}$ is defined on the whole set but only has the right estimation on a discrete set as $T=\{t_i\}$ and the counterpart with the unconstrained set is denoted as $H(w)$. According to the definition, we have $\forall t\in T, h'(t) = H(t)$. The gap between the estimation from $h'$ and $H$ on unconstrained variable $w$ can be depicted as follows.

\begin{thm}\label{thm:1}
Given $w\not\in T$ and $t\in T$, if assuming $h'$ and $H$ are $L_h$ and $L_H$-Lipschitz continuous, we have
\[\|h'(w) - H(w)\|_2 \leq (L_h+L_H)\|t-w\|_2\]
\end{thm}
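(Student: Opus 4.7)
The plan is to bound $\|h'(w)-H(w)\|_2$ by inserting a tactical intermediate value at the reference token $t\in T$, where $h'$ and $H$ agree by assumption, and then exploiting the two Lipschitz bounds separately on each side.

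Concretely, I would first write
\[
h'(w) - H(w) = \bigl(h'(w) - h'(t)\bigr) + \bigl(h'(t) - H(w)\bigr),
\]
and then use the defining property $h'(t)=H(t)$ (valid because $t\in T$, where $h'$ coincides with the ``true'' unconstrained estimator $H$) to rewrite the second bracket as $H(t) - H(w)$. This is the only place where the assumption $t\in T$ is used, and it converts a cross term involving both $h'$ and $H$ into two homogeneous differences that each function can handle on its own.

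Next, I would apply the triangle inequality in the Euclidean norm followed by the two Lipschitz assumptions: $\|h'(w)-h'(t)\|_2 \le L_h\|w-t\|_2$ and $\|H(t)-H(w)\|_2 \le L_H\|t-w\|_2$. Summing these and factoring gives the desired bound $(L_h+L_H)\|t-w\|_2$. No additional structure on $T$ (e.g., that $t$ is the nearest discrete token to $w$) is needed for the inequality itself, though to make the bound as tight as possible in practice one would instantiate it at the nearest $t\in T$ to $w$, which foreshadows the role of the GPT-4-generated reference words in the subsequent constraints.

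The argument is essentially a two-term triangle inequality, so there is no real technical obstacle; the only conceptual point worth flagging is that the theorem is stated for scalar-valued $h'$ and $H$ (the excerpt says $h'(t)\in\mathbb{R}$), so I would note that the same argument extends verbatim to the vector-valued embeddings used in CLIP by interpreting $L_h,L_H$ as Lipschitz constants with respect to the Euclidean norm. This justifies using the inequality later to motivate the reference-word and concept-level constraints on the learnable proxy embedding $\mathbf{w}_i$.
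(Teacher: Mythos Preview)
Your proposal is correct and follows essentially the same approach as the paper: insert and subtract $h'(t)$, use the identity $h'(t)=H(t)$ for $t\in T$, apply the triangle inequality, and then invoke the two Lipschitz bounds to obtain $(L_h+L_H)\|t-w\|_2$. Your additional remarks about extending to vector-valued outputs and tightening the bound via the nearest token anticipate exactly the corollary and the subsequent discussion in the paper.
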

\begin{proof}
According to the definition, we have
\begin{align*}
&\|h'(w) - H(w)\|_2 = \|h'(w) - h'(t)+h'(t) - H(w)\|_2\\
&\leq \|h'(w) - h'(t)\|_2 + \|h'(t) - H(w)\|_2\\
&=\|h'(w) - h'(t)\|_2 + \|H(t) - H(w)\|_2\\
&\leq (L_h+L_H)\|t-w\|_2
\end{align*}
\end{proof}

\paragraph{Remark} Theorem~\ref{thm:1} implies that the distance of the estimation $h'(w)$ to the ground-truth result $H(w)$ is bounded by that of $w$ to an arbitrary discrete token $t$. Therefore, by selecting the nearest token as the reference, the bound can be improved as shown in the following corollary.

\begin{cor}
With the assumptions in Theorem~\ref{thm:1} and letting $t' = \arg\min_i \|t_i-w\|_2$, we have
\[\|h'(w) - H(w)\|_2 \leq (L_h+L_H)\|t'-w\|_2\]
\end{cor}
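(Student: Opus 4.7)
The plan is to observe that the corollary is an immediate specialization of Theorem~\ref{thm:1} to a particular choice of the reference token. Theorem~\ref{thm:1} guarantees the bound $\|h'(w) - H(w)\|_2 \leq (L_h+L_H)\|t-w\|_2$ for \emph{every} $t \in T$, so I would simply instantiate the bound at the minimizer $t' = \arg\min_i \|t_i-w\|_2$, which lies in $T$ by construction.

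First I would note that $t'$ is well-defined under the standing assumption that $T = \{t_i\}$ is the discrete set of token embeddings (in the CLIP setting this is a finite vocabulary, so the argmin is attained; if ties occur, any minimizer can be selected). Next I would invoke Theorem~\ref{thm:1} with $t := t'$ to obtain the desired inequality directly.

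The only non-trivial content of the corollary is the observation that, among all valid upper bounds of the form $(L_h+L_H)\|t_i - w\|_2$ indexed by $t_i \in T$, the smallest is attained at $t'$. This tightens the guarantee of Theorem~\ref{thm:1} without requiring any additional analytic work, and it is exactly what motivates the reference-word and concept-level constraints proposed in Section~\ref{sec:method_proxy}: choosing a reference token as close as possible to the learned proxy $\mathbf{w}_i$ minimizes the approximation gap between the frozen text encoder's output $h'(\mathbf{w}_i)$ and the idealized continuous extension $H(\mathbf{w}_i)$.

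There is essentially no obstacle here: no auxiliary lemma, no estimation, and no case analysis is needed. The proof is one line once Theorem~\ref{thm:1} is in hand. The substantive contribution lies not in the derivation but in the interpretation, namely that the bound motivates searching for proxy embeddings near a carefully chosen reference token $t'$, which in the clustering setting is not directly observable and must be surrogated by GPT-4--generated candidates under the user's high-level concept.
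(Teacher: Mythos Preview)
Your proposal is correct and matches the paper's approach: the corollary is stated in the paper without a separate proof precisely because it is an immediate instantiation of Theorem~\ref{thm:1} at the nearest token $t'\in T$, exactly as you argue.
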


\subsubsection{Concept-level Constraint}\label{sec:method_concept}
According to the above analysis, a good reference $t'$ can help guarantee the performance. Fortunately, the input concept (e.g., color) from the user can be leveraged as the reference to constrain the freedom of the proxy word. Therefore, given the target concept word $u$, we can obtain its token embedding as $\mathbf{u}=\phi(u)$. Then, to learn appropriate representations from the proxy embedding, the original problem can be rewritten with the constraint as
\[\mathcal{L}(\mathbf{w}_i) = - \langle f(x_i), h(t_i^*)\rangle\quad s.t. \quad \|\mathbf{w}_i - \mathbf{u}\|_2^2\leq \lambda\]
The constrained problem is equivalent to
\begin{eqnarray}\label{eq:ref}
\mathcal{L}(\mathbf{w}_i) = - \langle f(x_i), h(t_i^*)\rangle + \alpha \|\mathbf{w}_i - \mathbf{u}\|_2^2
\end{eqnarray}
following~\cite{convex}, which can be optimized effectively by gradient descent.
\vspace{-0.2cm}
\subsubsection{Constrained Optimization with Reference Word}\label{sec:method_reference}

However, it is well known that the user concept is often with a large scope covering a broad range of words (e.g., color covers all including but not limited to `red', `blue', `green', etc.). As suggested by our above theoretical analysis, it is desired if the reference is as close as possible. In a clustering scenario and given only the user's high-level concept, it is challenging to find a closer reference word to further constrain the proxy learning. Fortunately, with the development of large language models (LLMs), we can leverage them (e.g., GPT-4) to provide relevant words according to a user's high-level concept as the candidate set and develop a selection strategy to obtain a closer reference word for each image. While the responses gathered from GPT-4 might not always precisely align with the data's ground truth, they indisputably furnish valuable candidate features, enriching the capabilities of \sysname{}.

To elucidate, considering the task of clustering a fruit dataset based on the concept of color, we can pose a query to GPT-4 as ``What are the common colors of fruit?''. The response we obtain is ``Common colors of fruit include red, yellow, green, orange, purple, and blue''. The colors enumerated in this response can serve as reference word candidates. It is worth noting that while the color spectrum derived from this method might surpass the actual colors present in the data, we collect all of them into the candidate set of reference words $\{z_k\}_k$, where $z_1$: ``red'', $z_2$: ``yellow'', etc. Then, their text representations can be obtained from the prompt $t_k$ as ``fruit with the color of $z_k$''. Given the image $x_i$, the closest reference can be observed as
\[z_i = \arg\max_k \langle \mathbf{x}_i, \mathbf{t}_k\rangle\]
where $\mathbf{t}_k=h(t_k)$. After that, $\mathbf{w}_i$ can be initialized with the token embedding of $z_i$ as $\mathbf{z}_i=\phi(z_i)$. Moreover, we change the regularization using a closer reference word compared to the high-level concept as
\begin{equation}\label{eq:loss_c_l2}
   \mathcal{L}(\mathbf{w}_i) = - \langle f(x_i), h(t_i^*)\rangle + \alpha\|\mathbf{w}_i - \mathbf{z}_i\|_2^2 
\end{equation}

\begin{table}[t]
   \centering
   \resizebox{0.67\columnwidth}{!}{
       \begin{tabular}{cc cc}
       \toprule
        {Datasets}    &   \# Samples & \# Clusters \\
        \midrule
        {ALOI} & 288  & 2;2 \\
        {Card} & 8,029  & 13;4 \\
        {CMUface} & 640  & 4;20;2;4 \\
        {Fruit}  & 105  & 3;3 \\
        {Fruit360} & 4,856  & 4;4 \\
        {Stanford Cars} & 1,200  & 4;3\\
        {Flowers} & 1,600  & 4;4  \\
        \bottomrule
       \end{tabular}
   }
   \vspace{-0.3cm}
   \caption{Dataset Statistics.}
   \vspace{-0.5cm}
   \label{tab:dataset}
\end{table}
\begin{table*}[t]
    \centering
    \resizebox{0.85\textwidth}{!}{
    \begin{tabular}{cc|ccccccccccccc}
    \toprule
         \multirow{2}{*}{Dataset} & \multirow{2}{*}{Clustering} & \multicolumn{2}{c}{MSC} & \multicolumn{2}{c}{MCV} & \multicolumn{2}{c}{ENRC} & \multicolumn{2}{c}{iMClusts} & \multicolumn{2}{c}{AugDMC}  & \multicolumn{2}{c}{\sysname{}} \\ 
        ~ & ~ & NMI & RI & NMI & RI & NMI & RI & NMI & RI & NMI & RI  & NMI & RI \\ 
        \midrule
        \multirow{2}{*}{ALOI} & Color & 0.1563 & 0.3428 & 0.6982 & 0.7439 & 0.9833 & 0.9892 & 1.0000 & 1.0000 & 1.0000 & 1.0000  & 1.0000 & 1.0000 \\ 
        ~ & Shape & 0.2968 & 0.5199 & 0.7359 & 0.8261 & 0.9732 & 0.9861 & 0.9963 & 0.9989 & 1.0000 & 1.0000  & 1.0000 & 1.0000 \\ 
        \midrule
        \multirow{2}{*}{Fruit} & Color & 0.6886 & 0.8051 & 0.6266 & 0.7685 & 0.7103 & 0.8511 & 0.7351 & 0.8632 & 0.8517 & 0.9108  & \textcolor{blue}{\textbf{0.8619}} & \textcolor{blue}{\textbf{0.9526}} \\ 
        ~ & Species & 0.1627 & 0.6045 & 0.2733 & 0.6597 & 0.3187 & 0.6536 & 0.3029 & 0.6743 & 0.3546 & 0.7399  & \textcolor{blue}{\textbf{1.0000}} & \textcolor{blue}{\textbf{1.0000}} \\ 
        \midrule
        \multirow{2}{*}{Fruit360} & Color & 0.2544 & 0.6054 & 0.3776 & 0.6791 & 0.4264 & 0.6868 & 0.4097 & 0.6841 & 0.4594 & 0.7392  & \textcolor{blue}{\textbf{0.6239}} & \textcolor{blue}{\textbf{0.8243}} \\ 
        ~ & Species & 0.2184 & 0.5805 & 0.2985 & 0.6176 & 0.4142 & 0.6984 & 0.3861 & 0.6732 & 0.5139 & 0.7430  & \textcolor{blue}{\textbf{0.5284}} & \textcolor{blue}{\textbf{0.7582}} \\ 
        \midrule
        \multirow{2}{*}{Card} & Order & 0.0807 & 0.7805 & 0.0792 & 0.7128 & 0.1225 & 0.7313 & 0.1144 & 0.7658 & 0.1440 & 0.8267  & \textcolor{blue}{\textbf{0.3653}} & \textcolor{blue}{\textbf{0.8587}} \\ 
        ~ & Suits & 0.0497 & 0.3587 & 0.0430 & 0.3638 & 0.0676 & 0.3801 & 0.0716 & 0.3715 & 0.0873 & 0.4228  & \textcolor{blue}{\textbf{0.2734}} & \textcolor{blue}{\textbf{0.7039}} \\ 
        \midrule
        \multirow{4}{*}{CMUface} & Emotion & 0.1284 & 0.6736 & 0.1433 & 0.5268 & 0.1592 & 0.6630 & 0.0422 & 0.5932 & 0.0161 & 0.5367  & \textcolor{blue}{\textbf{0.1786}} & \textcolor{blue}{\textbf{0.7105}} \\ 
        ~ & Glass & 0.1420 & 0.5745 & 0.1201 & 0.4905 & 0.1493 & 0.6209 & 0.1929 & 0.5627 & 0.1039 & 0.5361  & \textcolor{blue}{\textbf{0.3402}} & \textcolor{blue}{\textbf{0.7068}} \\ 
        ~ & Identity & 0.3892 & 0.7326 & 0.4637 & 0.6247 & 0.5607 & 0.7635 & 0.5109 & 0.8260 & 0.5875 & 0.8334  & \textcolor{blue}{\textbf{0.6625}} & \textcolor{blue}{\textbf{0.9496}} \\ 
        ~ & Pose & 0.3687 & 0.6322 & 0.3254 & 0.6028 & 0.2290 & 0.5029 & 0.4437 & 0.6114 & 0.1320 & 0.5517  & \textcolor{blue}{\textbf{0.4693}} & \textcolor{blue}{\textbf{0.6624}} \\ 
        \midrule
        \multirow{2}{*}{Stanford Cars} & Color & 0.2331 & 0.6158 & 0.2103 & 0.5802 & 0.2465 & 0.6779 & 0.2336 & 0.6552 & 0.2736 & 0.7525 & \textcolor{blue}{\textbf{0.7360}} & \textcolor{blue}{\textbf{0.9193}} \\ 
        ~ & Type & 0.1325 & 0.5336 & 0.1650 & 0.5634 & 0.2063 & 0.6217 & 0.1963 & 0.5643 & 0.2364 & 0.7356  & \textcolor{blue}{\textbf{0.6355}} & \textcolor{blue}{\textbf{0.8399}} \\ 
        \midrule
        \multirow{2}{*}{Flowers} & Color & 0.2561 & 0.5965 & 0.2938 & 0.5860 & 0.3329 & 0.6214 & 0.3169 & 0.6127 & 0.3556 & 0.6931  & \textcolor{blue}{\textbf{0.6426}} & \textcolor{blue}{\textbf{0.7984}} \\ 
        ~ & Species & 0.1326 & 0.5273 & 0.1561 & 0.6065 & 0.1894 & 0.6195 & 0.1887 & 0.6077 & 0.1996 & 0.6227  & \textcolor{blue}{\textbf{0.6013}} & \textcolor{blue}{\textbf{0.8103}} \\ 
        \bottomrule
    \end{tabular}
    
    }
    \vspace{-0.3cm}
    \caption{Quantitative comparison. The significantly best results with 95\% confidence are in bold.}
    \label{tab:quantitive}
\vspace{-0.3cm}
\end{table*}
\vspace{-0.3cm}

\subsubsection{Contrastive Concepts}
In some application scenarios, one user may need more than one clustering and provide high-level concepts as $\{u_j\}$, e.g., $u_1$:``color'', $u_2$:``species'', etc. For the concept ``color'', the irrelevant concept ``species'' can be leveraged as the negative constraint for the learning of proxy word. Concretely, let $u_w$ denote the target concept word, a contrastive loss can be adopted as regularization
\[R(\mathbf{w}_i) = -\log \frac{\exp(\mathbf{w}_i^\top \mathbf{u}_w)}{\sum_j \exp(\mathbf{w}_i^\top \mathbf{u}_j)}\]
and the final objective becomes
\begin{align}\label{eq:loss_c_c}
&\mathcal{L}(\mathbf{w}_i) = \nonumber\\
&- \langle f(x_i), h(t_i^*)\rangle + \alpha \|\mathbf{w}_i - \mathbf{z}_i\|_2^2 - \beta \log \frac{\exp(\mathbf{w}_i^\top \mathbf{u}_w)}{\sum_j \exp(\mathbf{w}_i^\top \mathbf{u}_j)} 
\end{align}
where the first term is to infer the user-specific feature, while the latter two terms constrain the proxy word to the reference words for the appropriate representation extraction from the pre-trained text encoder. The overall framework of Multi-MaP is illustrated in Fig.~\ref{fig:intro}.

\vspace{-0.2cm}
\section{Experiments}

To demonstrate our proposed method, we evaluate \sysname ~on all publicly available image datasets in multiple clustering, including 
{ALOI}~\cite{geusebroek2005amsterdam}, 
{Stanford Cars}~\cite{krause20133d},
{Card}~\cite{DBLP:conf/inns-dlia/YaoLRH23}, {CMUface}~\cite{gunnemann2014smvc},
{Flowers}~\cite{nilsback2008automated},
{Fruit}~\cite{hu2017finding}, and {Fruit360}~\cite{DBLP:conf/inns-dlia/YaoLRH23}
% {StickFig}~\cite{gunnemann2014smvc}, and {MNIST}~\cite{lecun1998gradient} 
as summarized in Table~\ref{tab:dataset}. 

We compare \sysname{} against five state-of-the-art methods: 
\textbf{MSC}~\cite{hu2017finding} is a traditional multiple clustering method that uses hand-crafted features; \textbf{MCV}~\cite{guerin2018improving} leverages multiple feature extractors to represent different ``views’’ of the same data and employs a multi-input neural network to enhance clustering outcomes; \textbf{ENRC}~\cite{miklautz2020deep} is a deep multiple clustering method that integrates auto-encoder and clustering objective to generate different clusterings; \textbf{iMClusts}~\cite{ren2022diversified} makes use of the expressive representational power of deep autoencoders and multi-head attention to accomplish multiple clusterings; \textbf{AugDMC}~\cite{DBLP:conf/inns-dlia/YaoLRH23} leverages augmentations to learn different image representations to achieve multiple clustering.

\subsection{Experiment Setup}

We employ Adam and set momentum as $0.9$ to train the model for $1000$ epochs. All hyper-parameters are searched according to the loss score of \sysname{}, where the learning rate is searched in $\{0.1,0.05,0.01,0.005,0.001,0.0005\}$, weight decay is in $\{0.0005, 0.0001, 0.00005, 0.00001, 0\}$, $\alpha, \beta$ are in $\{0.0, 0.1, 0.2, \dots, 1.0\}$, and $\lambda$ is fixed as $1$ for all the experiments. For the non pre-trained methods, we perform k-means~\cite{lloyd1982least} 10 times due to its randomness and evaluate the average clustering performance using two quantitative metrics, that is, Normalized Mutual Information (NMI)~\cite{white2004performance} and Rand index (RI)~\cite{rand1971objective}. These measures range in $[0, 1]$, and higher scores imply more accurate clustering results.
The experiments are conducted with GPU NVIDIA GeForce RTX 2080 Ti. 

It should also be noted that some data are difficult to obtain corresponding candidate labels from GPT-4 or the labels do not provide semantic features, such as names. For example, for the identity clustering for the CMUface dataset~\cite{gunnemann2014smvc}, different identities represent different people and the semantic meaning of names should not affect the clustering results. In this case, we randomly extract 10 words from WordNet~\cite{fellbaum2010wordnet} as reference words, in order to make the candidate labels more distinctive. For instance, we randomly choose ``abstain, candid, function, haphazard, knot, luxury, nonchalance, pension, resilience, taciturn'' for the above scenario. Furthermore, all publicly available multiple clustering datasets provide each ground-truth clustering a high-level concept, e.g., `shape', `pose', etc. Therefore, in the experiment, we directly use them as users' preferences for our evaluation purposes.

\subsection{Performance Comparison}

In our experiments, after we obtain the proxy word embedding of each image for a desired concept, we feed them to k-means to obtain the corresponding clustering. Since k-means is random, we repeat ten times and the average performance is reported in
Table~\ref{tab:quantitive}. The best results are marked by bold numbers. 

We can observe that \sysname{} outperforms the baselines in all the cases, indicating the superiority of the proposed method. This also shows a strong generalization ability of the pre-trained model by CLIP, which can capture the features of data in different aspects.

\begin{table}[t!]
    \centering
    %\huge
    %\renewcommand{\arraystretch}{1.4}
    \resizebox{\columnwidth}{!}{
    \begin{tabular}{cc|cccccc}
    \toprule
        % Dataset & Clustering & CLIP$_\text{image}$ & ~ & CLIP$_\text{GPT}$ & ~ & CLIP$_\text{label}$  & ~ & \sysname{}$_{\alpha, \beta=0}$ & ~ & \sysname{}$_{\alpha=0}$ & ~ & \sysname{}$_{\beta=0}$ & ~ & \sysname{} & ~ \\ 
        % ~ & ~ & NMI & RI & NMI & RI & NMI & RI & NMI & RI & NMI & RI & NMI & RI & NMI & RI \\ 
        \multirow{2}{*}{Dataset} & \multirow{2}{*}{Clustering} & \multicolumn{2}{c}{CLIP$_\text{GPT}$} & \multicolumn{2}{c}{CLIP$_\text{label}$} & \multicolumn{2}{c}{\sysname{}} \\
  
        ~ & ~ & NMI & RI & NMI & RI & NMI & RI \\ 
        \midrule

        \multirow{2}{*}{ALOI} & Color  & 0.8581 & 0.9407 & 1.0000 & 1.0000 & 1.0000 & 1.0000 \\ 
        ~ & Shape  & 1.0000 & 1.0000 & 1.0000 & 1.0000 & 1.0000 & 1.0000 \\ 
        \midrule
        \multirow{2}{*}{Fruit} & Color  & 0.7912 & 0.9075 & \textcolor{blue}{\textbf{0.8629}} & \textcolor{blue}{\textbf{0.9780}} & 0.8619 & 0.9526 \\ 
        ~ & Species  & 0.9793 & 0.9919 & 1.0000 & 1.0000 & 1.0000 & 1.0000 \\ 
        \midrule
        \multirow{2}{*}{Fruit360} & Color  & 0.5613 & 0.7305 & 0.5746 & 0.7673 & \textcolor{blue}{\textbf{0.6239}} & \textcolor{blue}{\textbf{0.8243}} \\ 
        ~ & Species  & 0.4370 & 0.7552 & \textcolor{blue}{\textbf{0.5364}} & \textcolor{blue}{\textbf{0.7631}} & 0.5284 & 0.7582 \\ 
        \midrule
        \multirow{2}{*}{Card} & Order & 0.3518 & 0.8458 & 0.3518 & 0.8458 & \textcolor{blue}{\textbf{0.3653}} & \textcolor{blue}{\textbf{0.8587}} \\ 
        ~ & Suits  & 0.2711 & 0.6123 & 0.2711 & 0.6123 & \textcolor{blue}{\textbf{0.2734}} & \textcolor{blue}{\textbf{0.7039}} \\ 
        \midrule
        \multirow{4}{*}{CMUface} & Emotion & 0.1576 & 0.6532 & 0.1590 & 0.6619 & \textcolor{blue}{\textbf{0.1786}} & \textcolor{blue}{\textbf{0.7105}} \\ 
        ~ & Glass  & 0.2905 & 0.6869 & \textcolor{blue}{\textbf{0.4686}} & \textcolor{blue}{\textbf{0.7505}} & 0.3402 & 0.7068 \\ 
        ~ & Identity  & 0.1998 & 0.6388 & 0.2677 & 0.7545 & \textcolor{blue}{\textbf{0.6625}} & \textcolor{blue}{\textbf{0.9496}} \\ 
        ~ & Pose  & 0.4088 & 0.6473 & 0.4691 & 0.6409 & \textcolor{blue}{\textbf{0.4693}} & \textcolor{blue}{\textbf{0.6624}} \\ 
        \midrule
        \multirow{2}{*}{Stanford Cars} & Color  & 0.6539 & 0.8237 & 0.6830 & 0.8642 & \textcolor{blue}{\textbf{0.7360}} & \textcolor{blue}{\textbf{0.9193}} \\ 
        ~ & Type  & 0.6207 & 0.7931 & \textcolor{blue}{\textbf{0.6429}} & \textcolor{blue}{\textbf{0.8456}} & 0.6355 & 0.8399 \\ 
        \midrule
        \multirow{2}{*}{Flowers} & Color  & 0.5653 & 0.7629 & 0.5828 & 0.7836 & \textcolor{blue}{\textbf{0.6426}} & \textcolor{blue}{\textbf{0.7984}} \\ 
        ~ & Species  & 0.5620 & 0.7553 & \textcolor{blue}{\textbf{0.6019}} & 0.7996 & 0.6013 & \textcolor{blue}{\textbf{0.8103}} \\ 
        \bottomrule
    \end{tabular}
    }
    \vspace{-0.3cm}
    \caption{Variants of CLIP. The significantly best results with 95\% confidence are in bold.}
    \label{tab:clip}
\vspace{-0.5cm}
\end{table}
\begin{table*}[h]
   \centering
   \resizebox{0.85\textwidth}{!}{
        \begin{tabular}{c c|cc|cc|cc|cc|cc}
        \toprule
         & & \multicolumn{2}{c}{\sysname{}$_{p}$} & \multicolumn{2}{c}{\sysname{}$_{c}$} & \multicolumn{2}{c}{\sysname{}$_{r}$} & \multicolumn{2}{c}{\sysname{}$_{cr}$} & \multicolumn{2}{c}{\sysname{}} \\

        \midrule
        
        \multirow{4}{*}{Modules}& Proxy Learning & \multicolumn{2}{c}{\checkmark}& \multicolumn{2}{c}{\checkmark}& \multicolumn{2}{c}{\checkmark}& \multicolumn{2}{c}{\checkmark}  & \multicolumn{2}{c}{\checkmark}\\
        
         ~& Concept Word & \multicolumn{2}{c}{\scalebox{0.85}[1]{$\times$}}& \multicolumn{2}{c}{\checkmark}& \multicolumn{2}{c}{\scalebox{0.85}[1]{$\times$}}& \multicolumn{2}{c}{\checkmark} & \multicolumn{2}{c}{\checkmark}\\

        ~ & Reference Word & \multicolumn{2}{c}{\scalebox{0.85}[1]{$\times$}}& \multicolumn{2}{c}{\scalebox{0.85}[1]{$\times$}}& \multicolumn{2}{c}{\checkmark}& \multicolumn{2}{c}{\checkmark} & \multicolumn{2}{c}{\checkmark}\\

        ~ & Contrastive Concepts & \multicolumn{2}{c}{\scalebox{0.85}[1]{$\times$}}& \multicolumn{2}{c}{\scalebox{0.85}[1]{$\times$}}& \multicolumn{2}{c}{\scalebox{0.85}[1]{$\times$}}& \multicolumn{2}{c}{\scalebox{0.85}[1]{$\times$}} & \multicolumn{2}{c}{\checkmark}\\

        \midrule

        ~ & ~ & NMI$\uparrow$& RI$\uparrow$& NMI$\uparrow$& RI$\uparrow$& NMI$\uparrow$& RI$\uparrow$ & NMI$\uparrow$& RI$\uparrow$&NMI$\uparrow$& RI$\uparrow$\\

        \midrule

        \multirow{2}{*}{ALOI~\cite{geusebroek2005amsterdam}} & Color & 0.9619 & 0.9826 & 1.0000 & 1.0000 & 0.9795 & 0.9869 & 1.0000 & 1.0000 & 1.0000 & 1.0000 \\ 
        ~ & Shape & 1.0000 & 1.0000 & 1.0000 & 1.0000 & 1.0000 & 1.0000 & 1.0000 & 1.0000 & 1.0000 & 1.0000 \\ 
        \midrule
        \multirow{2}{*}{Fruit~\cite{hu2017finding}} & Color & 0.7642 & 0.8439 & 0.8215 & 0.9283 & 0.8136 & 0.9073 & 0.8484 & 0.9308 & \textcolor{blue}{\textbf{0.8619}} & \textcolor{blue}{\textbf{0.9526}} \\ 
        ~ & Species & 1.0000 & 1.0000 & 1.0000 & 1.0000 & 1.0000 & 1.0000 & 1.0000 & 1.0000 & 1.0000 & 1.0000 \\ 
        \midrule
        \multirow{2}{*}{Fruit360~\cite{DBLP:conf/inns-dlia/YaoLRH23}} & Color & 0.5643 & 0.7665 & 0.6217 & 0.7836 & 0.5910 & 0.7746 & 0.6089 & 0.7965 & \textcolor{blue}{\textbf{0.6239}} & \textcolor{blue}{\textbf{0.8243}} \\ 
        ~ & Species & 0.5077 & 0.7368 & 0.5137 & 0.7436 & 0.5094 & 0.7425 & 0.5199 & 0.7428 & \textcolor{blue}{\textbf{0.5284}} & \textcolor{blue}{\textbf{0.7582}} \\ 
        \midrule
        \multirow{2}{*}{Card~\cite{DBLP:conf/inns-dlia/YaoLRH23}} & Order & 0.1932 & 0.8152 & 0.3568 & 0.8472 & 0.3113 & 0.8229 & 0.3616 & 0.8094 & \textcolor{blue}{\textbf{0.3653}} & \textcolor{blue}{\textbf{0.8587}} \\ 
        ~ & Suits & 0.2375 & 0.6282 & 0.2696 & 0.6641 & 0.2498 & 0.6365 & 0.2562 & 0.6599 & \textcolor{blue}{\textbf{0.2734}} & \textcolor{blue}{\textbf{0.7039}} \\ 
        \midrule
        \multirow{4}{*}{CMUface~\cite{gunnemann2014smvc}} & Emotion & 0.1690 & 0.6170 & 0.1714 & 0.6229 & 0.1697 & 0.6360 & 0.1713 & 0.6843 & \textcolor{blue}{\textbf{0.1786}} & \textcolor{blue}{\textbf{0.7105}} \\ 
        ~ & Glass & 0.3112 & 0.6911 & 0.3269 & 0.7136 & 0.3162 & 0.6917 & 0.3370 & 0.7108 & \textcolor{blue}{\textbf{0.3402}} & \textcolor{blue}{\textbf{0.7068}} \\ 
        ~ & Identity & 0.5617 & 0.8234 & 0.6243 & 0.8359 & 0.5839 & 0.8263 & 0.6391 & 0.8946 & \textcolor{blue}{\textbf{0.6625}} & \textcolor{blue}{\textbf{0.9496}} \\ 
        ~ & Pose & 0.4361 & 0.6386 & 0.4550 & 0.6499 & 0.4381 & 0.6429 & 0.4387 & 0.6489 & \textcolor{blue}{\textbf{0.4693}} & \textcolor{blue}{\textbf{0.6624}} \\ 
        \midrule
        \multirow{2}{*}{Stanford cars~\cite{krause20133d}} & Color & 0.5939 & 0.7835 & 0.6836 & 0.8659 & 0.6729 & 0.8638 & 0.7112 & 0.9117 & \textcolor{blue}{\textbf{0.7360}} & \textcolor{blue}{\textbf{0.9193}} \\ 
        ~ & Type & 0.5569 & 0.7996 & 0.6383 & 0.8271 & 0.6091 & 0.8046 & 0.6289 & 0.8181 & \textcolor{blue}{\textbf{0.6355}} & \textcolor{blue}{\textbf{0.8399}} \\ 
        \midrule
        \multirow{2}{*}{Flowers~\cite{nilsback2008automated}} & Color & 0.5783 & 0.7723 & 0.5830 & 0.7833 & 0.5987 & 0.7849 & 0.6216 & 0.7941 & \textcolor{blue}{\textbf{0.6426}} & \textcolor{blue}{\textbf{0.7984}} \\ 
        ~ & Species & 0.5704 & 0.7608 & 0.5744 & 0.7842 & 0.5723 & 0.7811 & 0.5846 & 0.7892 & \textcolor{blue}{\textbf{0.6013}} & \textcolor{blue}{\textbf{0.8103}} \\ 
	\bottomrule
	\end{tabular}
   }
   \caption{Components ablation. All of our components boost performance consistently in all benchmark multi-clustering vision tasks.}
	\label{tab:ablation_supp}
\end{table*}

Since our method uses the CLIP encoder and GPT-4 to obtain clustering results,
a natural question arises how the performance would be if we directly use them in a zero-short manner.
Therefore, we provide two zero-shot variants of CLIP, that is, 
{CLIP$_\text{GPT}$} that uses GPT-4 to obtain the candidate labels and predicts labels through zero-shot classification with all candidate labels as class names, and
{CLIP$_\text{label}$} that performs zero-shot classification with all ground truth labels. It should be noted that CLIP$_\text{label}$ uses an unfair setting with a ground-truth label set known in advance, which is expected to provide the best performance using CLIP in a zero-short manner. The results are shown in Table~\ref{tab:clip}.

As expected, CLIP$_\text{label}$ achieves better performance than CLIP$_\text{GPT}$ in almost all cases, since CLIP$_\text{label}$ uses a fixed ground truth label set, while CLIP$_\text{GPT}$ uses candidate labels that may not match the ground truth exactly, introducing noise. Note that CLIP$_\text{GPT}$ and CLIP$_\text{label}$ achieve the same results on Cards, because the candidate labels provided by GPT-4 are exactly the same as the true labels.

Besides, \sysname{} performs better than CLIP$_\text{GPT}$ in almost all cases, indicating that the proposed method can learn more effective features through its training process. Moreover, although CLIP$_\text{label}$ uses the ground truth, which is expected to be the best, \sysname{} still outperforms CLIP$_\text{label}$ in some cases, such as the clustering of color for Fruit360 dataset. This is because CLIP is more inclined to capture the features of one aspect of the data, while \sysname{} learns better embedding of different aspects by training with the supervision of users' interests. Furthermore, \sysname{} also achieves very competitive results as CLIP$_\text{label}$ in the remaining cases, which further demonstrates the effectiveness of the proposed method.

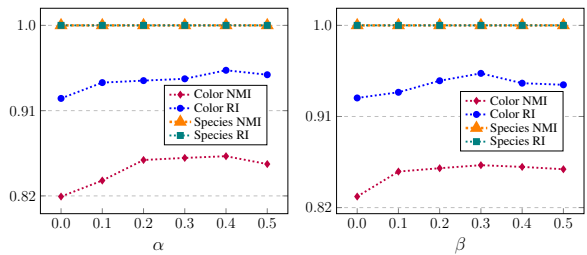
\begin{figure}[t!]
\centering
    \subfloat[Parameter sensitivity of $\alpha$]{
            \begin{tikzpicture}[font=\Large, scale=0.48]
                \begin{axis}[
                    legend cell align={left},
                    legend style={nodes={scale=0.7, transform shape}, at={(0.5,0.48)},anchor=west},
                    xlabel={$\alpha$\color{white}{$\beta$}},
                    xtick pos=left,
                    tick label style={font=\large},
                    ylabel style={font=\large},
                    xtick={1, 2, 3, 4, 5, 6},
                    xticklabels={$0.0$,$0.1$,$0.2$,$0.3$,$0.4$,$0.5$},
                    ytick={0.82, 0.91,1.0},
                    yticklabels={ $0.82$,$0.91$,$1.0$},
                    % legend pos= south east,
                    ymajorgrids=true,
                    grid style=dashed
                ]
                \addplot[
                    color=purple,
                    dotted,
                    mark options={solid},
                    mark=diamond*,
                    line width=1.5pt,
                    mark size=2pt
                    ]
                    coordinates {
                    (1, 0.8193)
                    (2, 0.8362)
                    (3, 0.8579)
                    (4, 0.8601)
                    (5, 0.8619)
                    (6, 0.8536)
                    };
                    \addlegendentry{Color NMI}
                \addplot[
                    color=blue,
                    dotted,
                    mark options={solid},
                    mark=*,
                    line width=1.5pt,
                    mark size=2pt
                    ]
                    coordinates {
                    (1, 0.9229)
                    (2, 0.9396)
                    (3, 0.9417)
                    (4, 0.9436)
                    (5, 0.9526)
                    (6, 0.9479)
                    };
                    \addlegendentry{Color RI}
                \addplot[
                    color=orange,
                    dotted,
                    mark options={solid},
                    mark=triangle*,
                    line width=2pt,
                    mark size=4pt
                    ]
                    coordinates {
                    (1, 1)
                    (2, 1)
                    (3, 1)
                    (4, 1)
                    (5, 1)
                    (6, 1)
                    };
                    \addlegendentry{Species NMI}
                \addplot[
                    color=teal,
                    dotted,
                    mark options={solid},
                    mark=square*,
                    line width=1.5pt,
                    mark size=2pt
                    ]
                    coordinates {
                    (1, 1)
                    (2, 1)
                    (3, 1)
                    (4, 1)
                    (5, 1)
                    (6, 1)
                    };
                    \addlegendentry{Species RI}
                \end{axis}
                \end{tikzpicture}
    }\subfloat[Parameter sensitivity of $\beta$]{
            \begin{tikzpicture}[font=\Large, scale=0.48]
                \begin{axis}[
                    legend cell align={left},
                    legend style={nodes={scale=0.7, transform shape}, at={(0.5,0.45)},anchor=west},
                    xlabel={$\beta$},
                    xtick pos=left,
                    tick label style={font=\large},
                    ylabel style={font=\large},
                    xtick={1, 2, 3, 4, 5, 6},
                    xticklabels={$0.0$,$0.1$,$0.2$,$0.3$,$0.4$,$0.5$},
                    ytick={0.82, 0.91,1.0},
                    yticklabels={ $0.82$,$0.91$,$1.0$},
                    % legend pos= south east,
                    ymajorgrids=true,
                    grid style=dashed
                ]
                \addplot[
                    color=purple,
                    dotted,
                    mark options={solid},
                    mark=diamond*,
                    line width=1.5pt,
                    mark size=2pt
                    ]
                    coordinates {
                    (1, 0.8309)
                    (2, 0.8556)
                    (3, 0.8589)
                    (4, 0.8619)
                    (5, 0.8602)
                    (6, 0.8579)
                    };
                    \addlegendentry{Color NMI}
                \addplot[
                    color=blue,
                    dotted,
                    mark options={solid},
                    mark=*,
                    line width=1.5pt,
                    mark size=2pt
                    ]
                    coordinates {
                    (1, 0.9283)
                    (2, 0.9339)
                    (3, 0.9453)
                    (4, 0.9526)
                    (5, 0.9429)
                    (6, 0.9413)
                    };
                    \addlegendentry{Color RI}
                \addplot[
                    color=orange,
                    dotted,
                    mark options={solid},
                    mark=triangle*,
                    line width=2pt,
                    mark size=4pt
                    ]
                    coordinates {
                    (1, 1)
                    (2, 1)
                    (3, 1)
                    (4, 1)
                    (5, 1)
                    (6, 1)
                    };
                    \addlegendentry{Species NMI}
                \addplot[
                    color=teal,
                    dotted,
                    mark options={solid},
                    mark=square*,
                    line width=1.5pt,
                    mark size=2pt
                    ]
                    coordinates {
                    (1, 1)
                    (2, 1)
                    (3, 1)
                    (4, 1)
                    (5, 1)
                    (6, 1)
                    };
                    \addlegendentry{Species RI}
                \end{axis}
                \end{tikzpicture}
    }
    \vspace{-0.3cm}
    \caption{Parameter analysis of $\alpha$ and $\beta$ on Fruit~\cite{hu2017finding}.}
    \label{fig:paramter1}
    \vspace{-0.6cm}
\end{figure}
\begin{figure*}[t]
\centering

    \subfloat[\small Color using ground truth]{
    \includegraphics[width=0.23\textwidth]{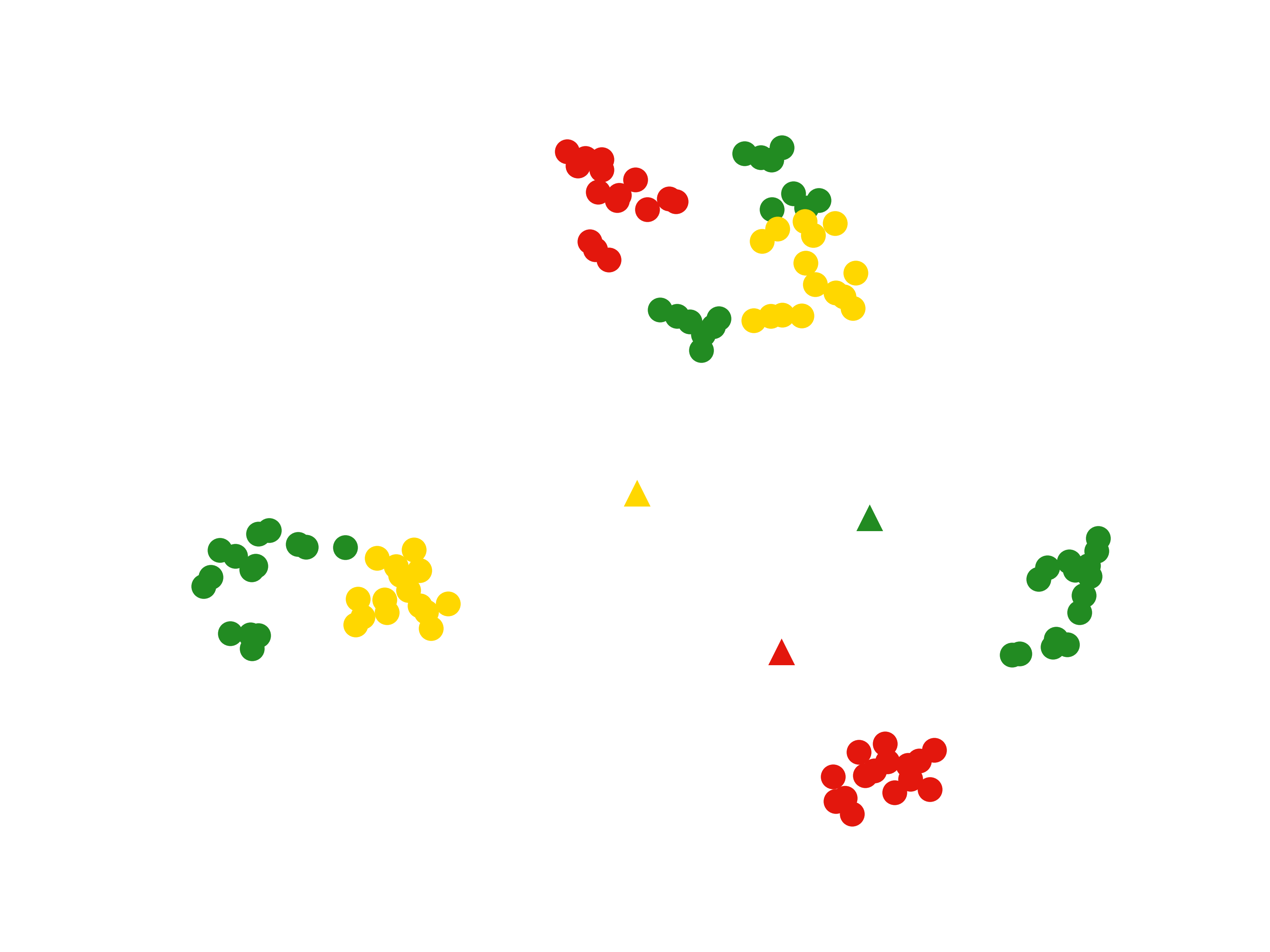}
    }
    \subfloat[\small Color using reference words]{
    \includegraphics[width=0.23\textwidth]{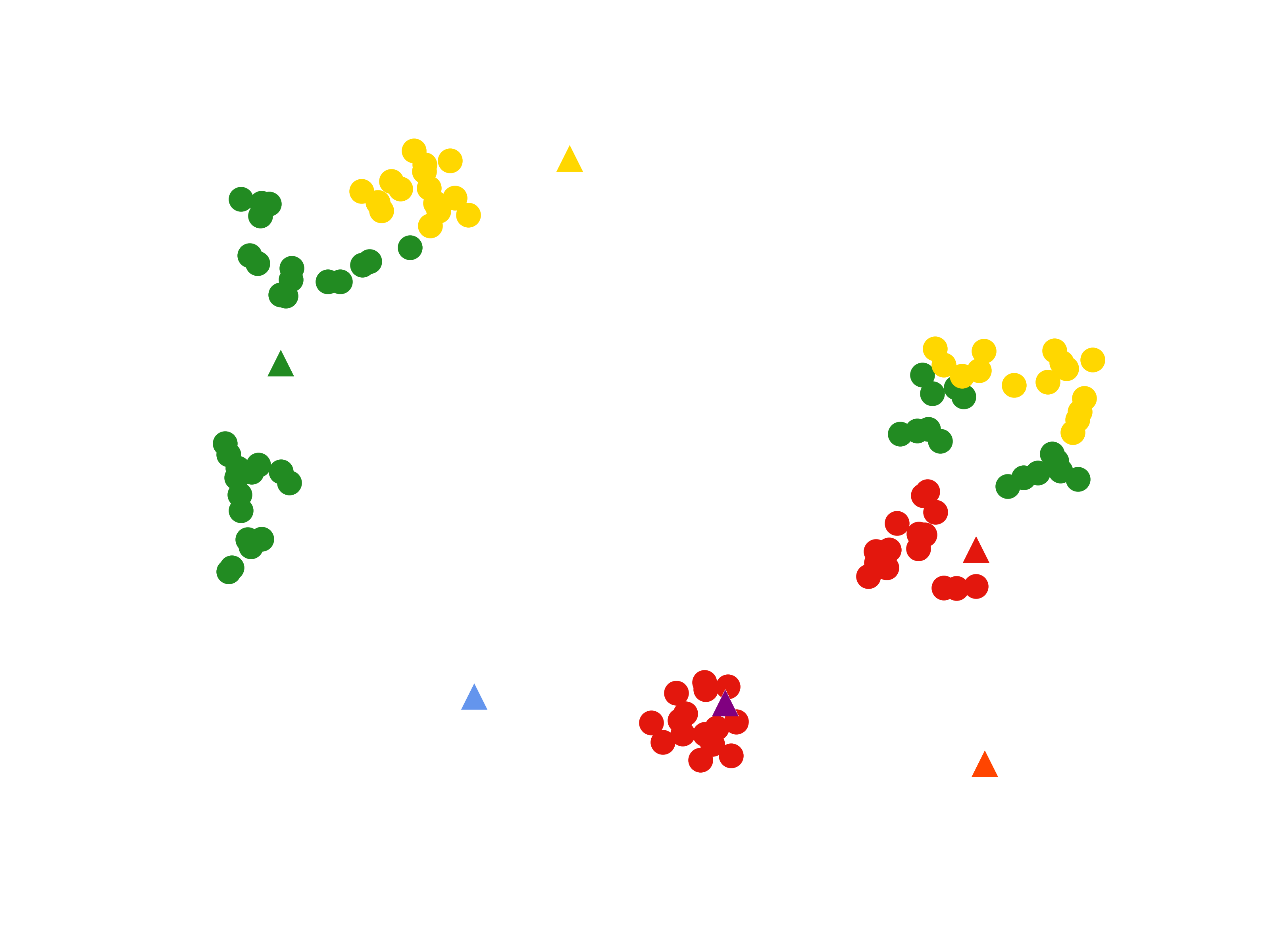}
    }
    \subfloat[\small Color of \sysname{}]{
    \includegraphics[width=0.23\textwidth]{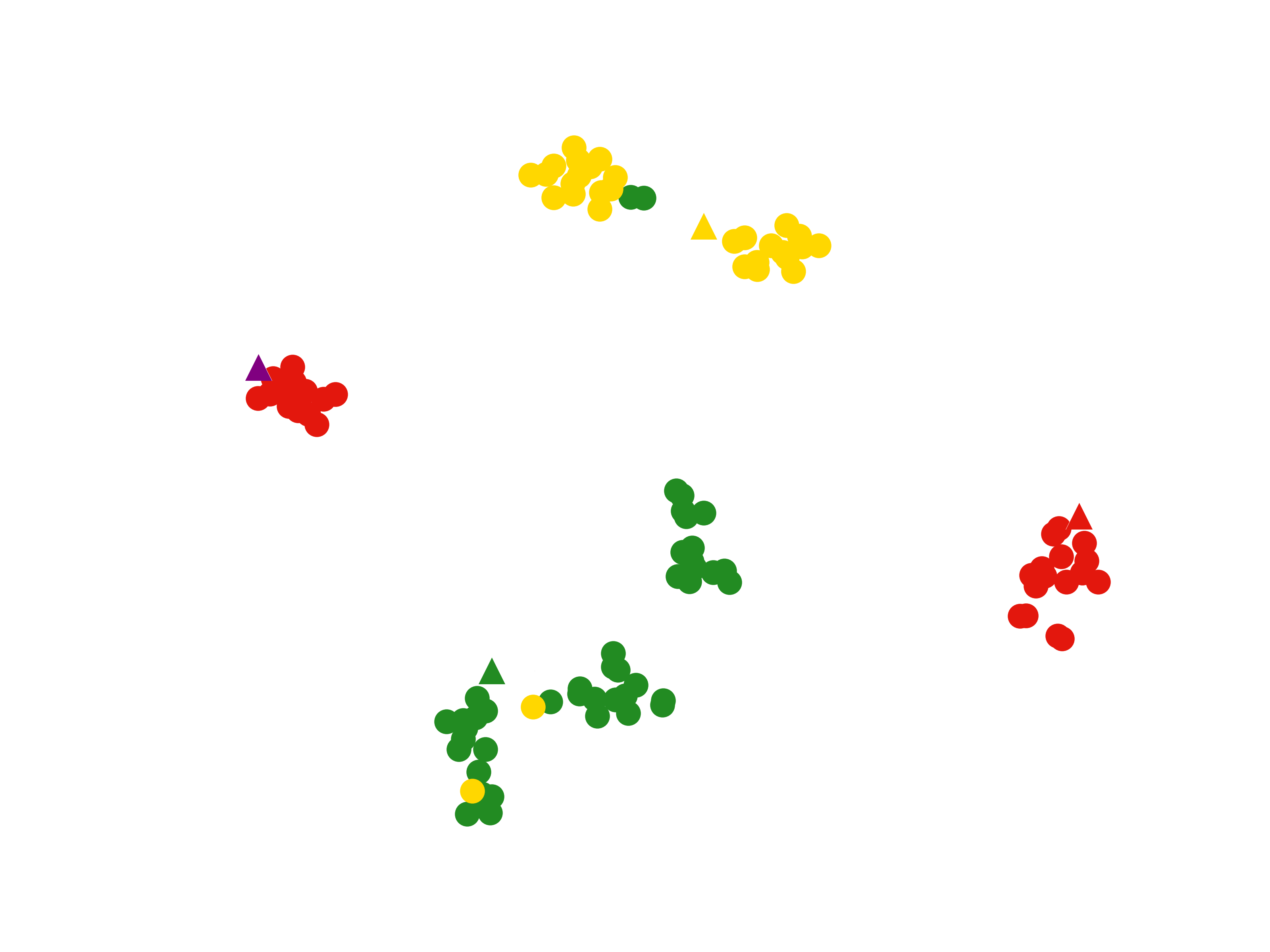}
    }
    \subfloat[\small Legend of color]{
    \includegraphics[width=0.2\textwidth]{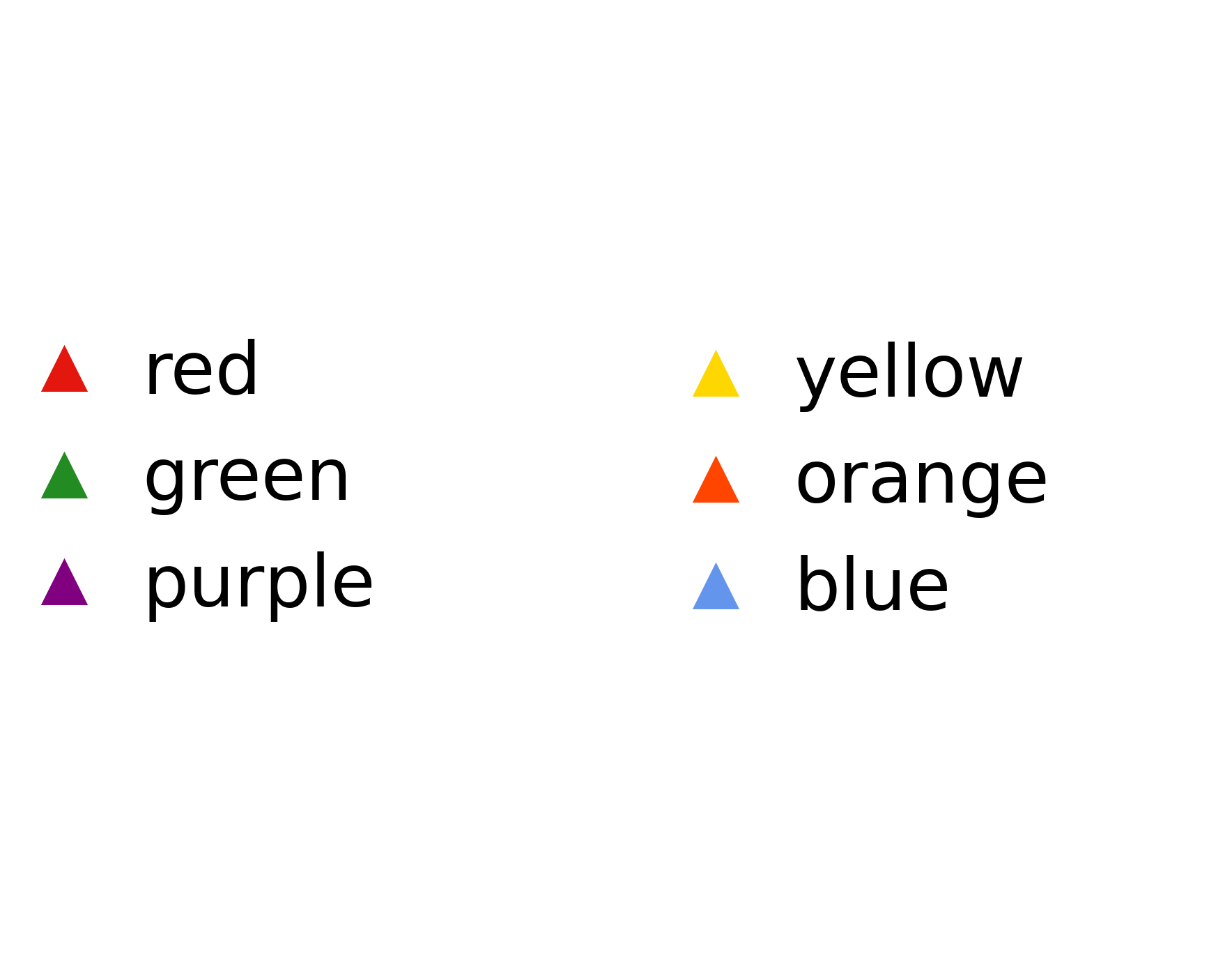}
    }
    \\
    
    \subfloat[\small  Species using ground truth]{
    \includegraphics[width=0.23\textwidth]{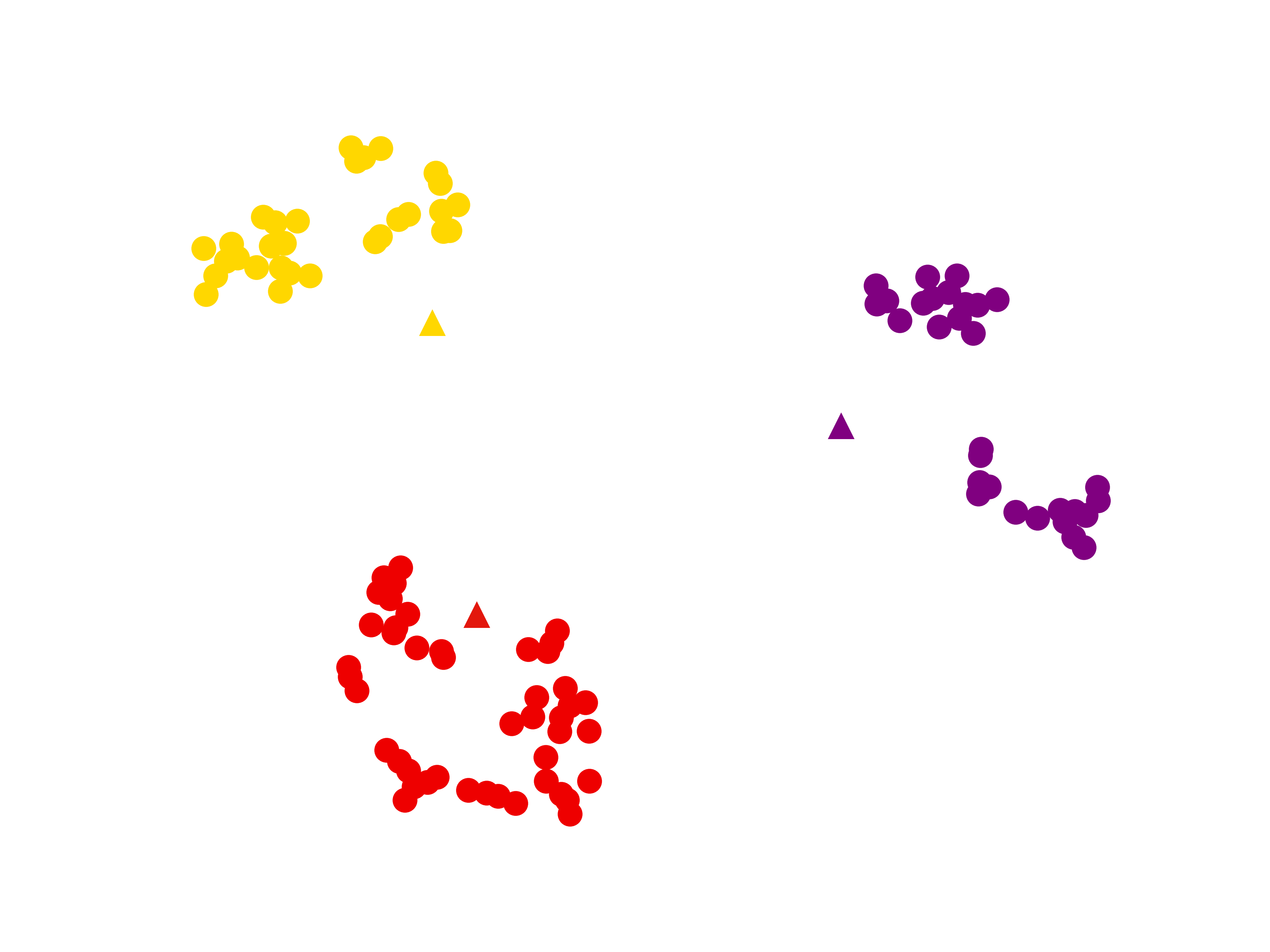}
    }
    \subfloat[\small Species using reference words]{
    \includegraphics[width=0.23\textwidth]{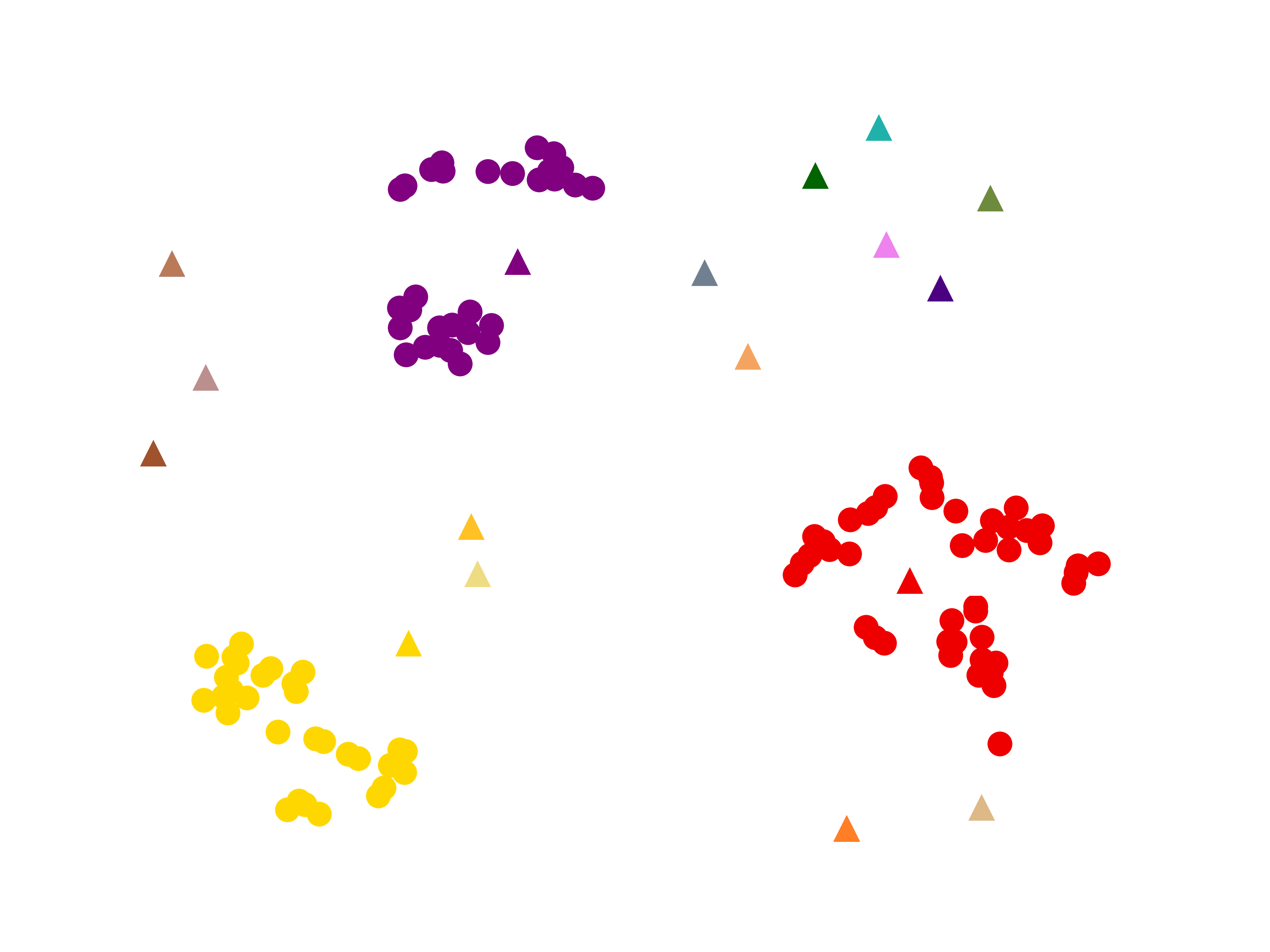}
    }
    \subfloat[\small Species of \sysname{}]{
    \includegraphics[width=0.23\textwidth]{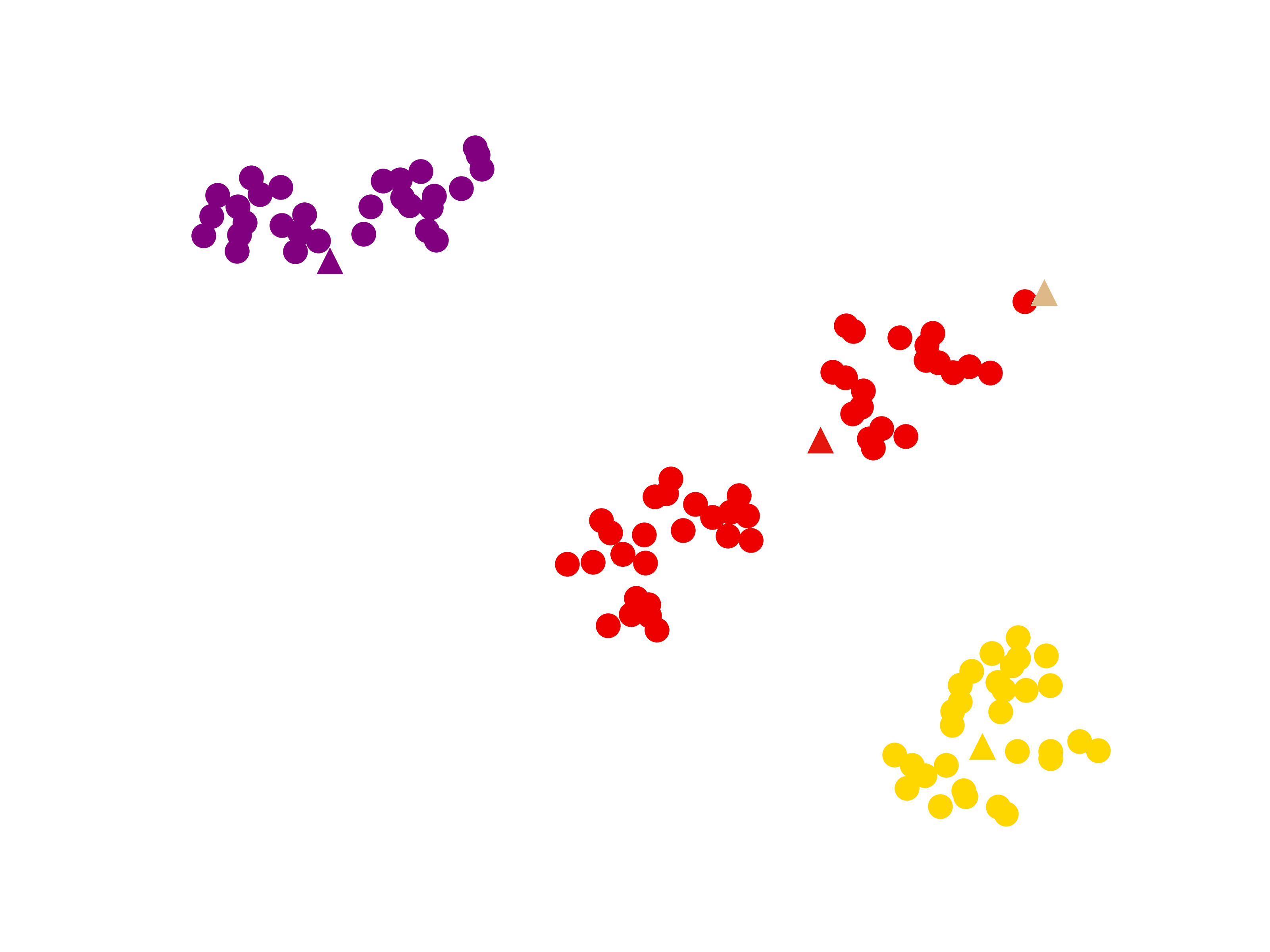}
    }
    \subfloat[\small Legend of species]{
    \includegraphics[width=0.2\textwidth]{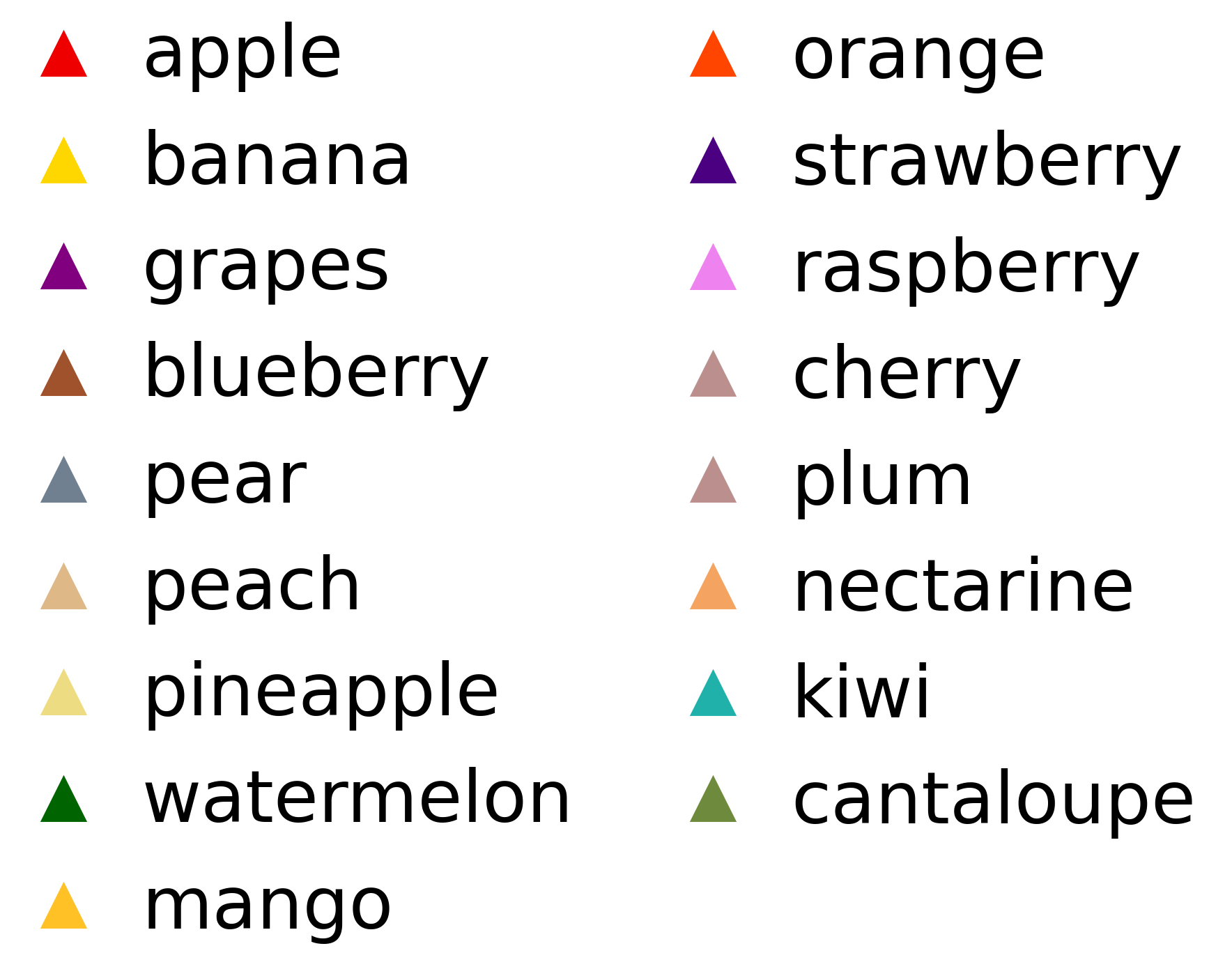}
    }

    \caption{Visualization of feature embeddings and related labels. The points represent the image or pseudo-word embeddings, and the triangles represent the prompt or label embeddings. Different colors represent different labels, which are indicated by the text next to the triangles.}
    \label{fig:visual}
    \vspace{-0.6cm}
\end{figure*}
\subsection{Ablation Study}

To validate the effectiveness of the proposed method, we show the gain from four components in \sysname{} in Table~\ref{tab:ablation_supp}. Let ``\sysname{}$_{p}$" denote the proxy learning without concept-level constraint, ``\sysname{}$_{c}$" denote the variant optimized by solely applying concept word to constrain the freedom of the proxy word, ``\sysname{}$_{r}$" denote the variant optimized by reference word provided by GPT-4 to find a closer reference word to further constrain the proxy learning, and ``\sysname{}$_{cr}$" denote the variant leveraged both concept word and reference word.

We can observe that the proxy learning only with concept word or reference word, i.e., \sysname{}$_{c}$ and \sysname{}$_{r}$, performs better than \sysname{}$_{p}$. This shows that the proposed concept-level constraint and constrained optimization with reference words play an important role in the model as demonstrated by our theoretical analysis. Moreover, the model with combined components, i.e, \sysname{}$_{cr}$, achieves better results than \sysname{}$_{r}$ and \sysname{}$_{c}$. This indicates the effectiveness of the combination of reference words and reference concepts. Finally, our proposal using all including the contrastive concepts can further improve the performance, and thus provide the best results on all cases. This further demonstrates our proposal.

\iffalse
\input{table/ablation}

To further validate the effectiveness of \sysname{}, we compare four variants of \sysname{} that are removing the reference word constraint, removing the concept-level constraint, removing both constraints, implementing reference constraint with a single concept, denoted as \sysname{}$_\text{woR}$, \sysname{}$_\text{woC}$, \sysname{}$_\text{woCR}$ and \sysname{}$_\text{MSE}$, respectively. 
The results are shown in Table~\ref{tab:ablation}. The proposed method achieved the best results, while the method that removed both reference word constraint and concept-level constraint performed the worst. This also shows that the proposed reference word constraint and concept-level constraint play an important role in the model. Moreover, \sysname{} performs better than \sysname{}$_\text{MSE}$, suggesting Multi-MaP can benefit from multiple concepts through the contrastive learning process.
\fi

\subsection{Parameter Analysis}

We further investigate the effect of the reference word constraint weight  $\alpha$ and concept-level constraint weight $\beta$ varying from $0.0$ to $0.5$. The results of \sysname{} on Fruit datasets are shown in Fig.~\ref{fig:paramter1} (a) and Fig.~\ref{fig:paramter1} (b), respectively. As $\alpha$ and $\beta$ change, the proposed method keeps a species score of $1$, since the image encoder can capture very effective species features. The above results also show that the proposed method can effectively capture useful information from images, reference words, and target concepts.
As $\alpha$ increases, the proposed method first increases and then decreases, and reaches the maximum values at $\alpha$=0.4. Similar results can be observed for $\beta$ that the performance of the proposed method first increases and then decreases as $\beta$ increases, and reaches the maximum values at $\beta$=0.3. Indicating a suitable value of $\alpha$ or $\beta$ is helpful for \sysname{} to obtain more effective embeddings for multiple clustering tasks. More studies such as efficiency analysis can be found in the supplementary.

\subsection{Visualization}

To further demonstrate the effectiveness of the proposed method, we visualize the representations obtained in CLIP$_\text{label}$, CLIP$_\text{GPT}$, and \sysname{}. Specifically, for CLIP$_\text{label}$ and CLIP$_\text{GPT}$, we visualize the image representations as well as prompts generated with real labels and candidate labels, respectively. For the \sysname{}, we visualize the word embedding $\boldsymbol{w}_{*}$ and the candidate labels selected for initialization. The results are shown in Fig.~\ref{fig:visual}. 
For species clustering, we can see that image embeddings show very clear boundaries and correspond well to the prompt for CLIP$_\text{label}$, which indicates that CLIP can effectively capture the features of the species in the data. CLIP$_\text{GPT}$ uses the candidate labels to generate prompts, which introduces more noise, but benefits from the CLIP text encoder, the image embedding can keep a relatively far distance from most of the irrelevant prompts. However, since there are still a few images that are labeled as peaches (i.e., a noisy label), it performs slightly worse than CLIP$_\text{label}$, as shown in Fig.~\ref{fig:visual}(e). Besides, \sysname{} can capture the image and users' interests in the training process, therefore it compensates for the shortcomings of CLIP$_\text{GPT}$ and achieves better results.
On the other hand, for color clustering, the prompts are farther away from CLIP$_\text{label}$ and CLIP$_\text{GPT}$, that indicates the image embeddings mainly capture the features of species, which have no direct connection with color. CLIP$_\text{GPT}$ generates the prompt from the candidate label, which has more noise than the ground truth label, resulting in worse performance than CLIP$_\text{label}$. The proposed method can distinguish different colors more clearly, because it can learn from the user's interest and capture the color-related features. However, some red color embeddings are closer to purple, because some images in the datasets are actually purple, but labeled as red.
To sum up, the proposed method can learn more effective embeddings based on the users' interests for multiple clustering tasks.
\vspace{-0.2cm}
\section{Conclusion}

\vspace{-0.1cm}
To conclude, our study thoroughly investigates the significant challenges that current advanced deep learning techniques face in multiple clustering. A key issue is that users often do not need every clustering result produced by an algorithm, and selecting the most relevant one requires an in-depth understanding of each outcome. To overcome this challenge, the proposed method introduces a novel multi-modal proxy learning process, which effectively aligns a user's brief keyword describing the interest with the corresponding vision components. By integrating a multi-modal model and GPT-4 to precisely capture a user's interest using a keyword, the proposed approach uses both reference word constraint and concept-level constraint to discover personalized clustering result(s), which can also lead to enhanced performance. Experiments on diverse datasets demonstrate the superiority of the proposed method in multiple clustering tasks with a precise capture of a user's interest. The proposed method is limited by data with semantic meaningful labels, although we can use WordNet to help, whose comprehensive study will be our future work.
\vspace{-0.2cm}
\section{Acknowledgement}
This research is supported in part by Advata Gift funding. All opinions, findings, conclusions and recommendations in this paper are those of the author and do not necessarily reflect the views of
the funding agencies.

{
    \small
    \bibliographystyle{ieeenat_fullname}
    \bibliography{ie}
}

\clearpage
\setcounter{page}{1}
\appendix
\maketitlesupplementary

\section{Additional Results}
\subsection{Clustering Analysis}

We further analyze the clustering outcomes of \sysname{} against various clusters. The results are derived by applying different embeddings to produce all clustering outputs. Then, we compare each obtained clustering result to all ground truth clusterings. The findings, as depicted in Tables~\ref{tab:clustering1} and~\ref{tab:clustering2}, reveal that the top-performing outcomes exhibit a clear diagonal structure. This demonstrates that the representations generated by \sysname{} are capable of discerning different aspects of the same data, and then produce different clusterings aligning well with different ground truth data structures.

\begin{table}[h]
    \centering
    \resizebox{0.9\columnwidth}{!}{
    \begin{tabular}{cc|cccc}
    \toprule
        \multirow{2}{*}{Dataset} & \multirow{2}{*}{Clustering} & \multicolumn{2}{c}{$C
_1$} & \multicolumn{2}{c}{$C_2$} \\ 
        ~ & ~ & NMI & RI & NMI & RI \\ 
        \midrule
        \multirow{2}{*}{ALOI~\cite{geusebroek2005amsterdam}} & Color & \textcolor{blue}{\textbf{1.0000}} & \textcolor{blue}{\textbf{1.0000}} & 0.3164 & 0.6798 \\ 
        ~ & Shape & 0.3642 & 0.5491 & \textcolor{blue}{\textbf{1.0000}} & \textcolor{blue}{\textbf{1.0000}} \\ 
        \midrule
        \multirow{2}{*}{Fruit~\cite{hu2017finding}} & Color & \textcolor{blue}{\textbf{0.8619}} & \textcolor{blue}{\textbf{0.9526}} & 0.5379 & 0.6934 \\ 
        ~ & Species & 0.6953 & 0.7843 & \textcolor{blue}{\textbf{1.0000}} & \textcolor{blue}{\textbf{1.0000}} \\ 
        \midrule
        \multirow{2}{*}{Fruit360~\cite{DBLP:conf/inns-dlia/YaoLRH23}} & Color & \textcolor{blue}{\textbf{0.6239}} & \textcolor{blue}{\textbf{0.8243}} & 0.4242 & 0.7163 \\ 
        ~ & Species & 0.3551 & 0.6333 & \textcolor{blue}{\textbf{0.5284}} & \textcolor{blue}{\textbf{0.7582}} \\ 
        \midrule
        \multirow{2}{*}{Card~\cite{DBLP:conf/inns-dlia/YaoLRH23}} & Order & \textcolor{blue}{\textbf{0.3653}} & \textcolor{blue}{\textbf{0.8587}} & 0.1142 & 0.5562 \\ 
        ~ & Suits & 0.1346 & 0.5679 & \textcolor{blue}{\textbf{0.2734}} & \textcolor{blue}{\textbf{0.7039}} \\ 
        \midrule
        \multirow{2}{*}{Stanford Cars~\cite{krause20133d}} & Color & \textcolor{blue}{\textbf{0.7360}} & \textcolor{blue}{\textbf{0.9193}} & 0.4223 & 0.7526 \\ 
        ~ & Type & 0.3692 & 0.6245 & \textcolor{blue}{\textbf{0.6355}} & \textcolor{blue}{\textbf{0.8399}} \\ 
        \midrule
        \multirow{2}{*}{Flowers~\cite{nilsback2008automated}} & Color & \textcolor{blue}{\textbf{0.6426}} & \textcolor{blue}{\textbf{0.7984}} & 0.3277 & 0.7153 \\ 
        ~ & Species & 0.2884 & 0.6150 & \textcolor{blue}{\textbf{0.6013}} & \textcolor{blue}{\textbf{0.8103}} \\ 
        \bottomrule
    \end{tabular}
    }
    \caption{Clustering analysis in six benchmark multiple clustering vision tasks.}
    \label{tab:clustering1}
\end{table}

\begin{table}[h]
    \centering
    \resizebox{0.4\textwidth}{!}{
    \begin{tabular}{cc|cccc}
    \toprule
        Clustering & Metrics & C1 & C2 & C3 & C4 \\ 
        \midrule
        \multirow{2}{*}{Emotion} & NMI & \textcolor{blue}{\textbf{0.1786}} & 0.0362 & 0.0564 & 0.0435 \\ 
        ~ & RI & \textcolor{blue}{\textbf{0.7105}} & 0.4376 & 0.513 & 0.5683 \\ 
        \midrule
        \multirow{2}{*}{Glass} & NMI & 0.1104 & \textcolor{blue}{\textbf{0.3402}} & 0.1163 & 0.1567 \\ 
        ~ & RI & 0.6893 & \textcolor{blue}{\textbf{0.7068}} & 0.6429 & 0.6952 \\ 
        \midrule
        \multirow{2}{*}{Identity} & NMI & 0.2342 & 0.3627 & \textcolor{blue}{\textbf{0.6625}} & 0.325 \\ 
        ~ & RI & 0.5362 & 0.7632 & \textcolor{blue}{\textbf{0.9496}} & 0.7117 \\ 
        \midrule
        \multirow{2}{*}{Pose} & NMI & 0.0673 & 0.1258 & 0.1368 & \textcolor{blue}{\textbf{0.4693}} \\ 
        ~ & RI & 0.4989 & 0.5519 & 0.5867 & \textcolor{blue}{\textbf{0.6624}} \\ 
        \bottomrule
    \end{tabular}
    }
    \caption{Clustering analysis on CMUface~\cite{gunnemann2014smvc} datasets.}
    \label{tab:clustering2}
\end{table}

\subsection{Efficiency Analysis}

To further demonstrate the efficiency of our proposed method using the frozen pre-trained model by CLIP, we compare the efficiency of different deep multiple clustering methods. The experiments are conducted on a server with a GPU GeForece RTX 2080Ti. We show the running time on Fruit dataset. The running time and color clustering performance of each method are shown in Fig.~\ref{fig:time}. \sysname{} has significantly better performance than all the baselines in both effectiveness and efficiency. 
That is because our method can directly exploit the CLIP encoder to capture the image and text embeddings, without updating the encoder’s parameters, so its running time is much smaller than other methods. In summary, the proposed method shows the best performance under the least running time requirement.

\begin{figure}[h]
    \centering
    \includegraphics[width=0.8\columnwidth]{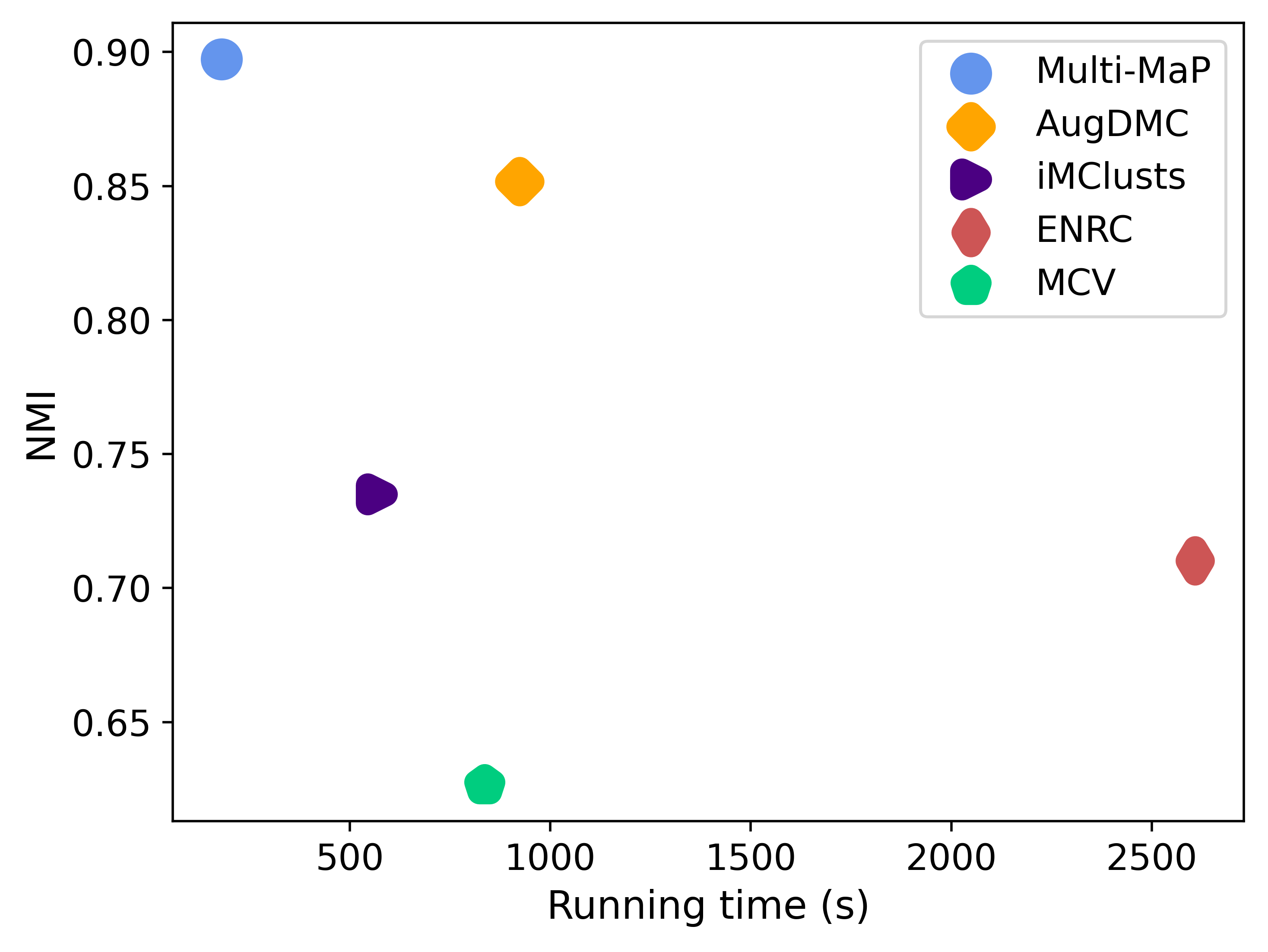}
    \caption{Performance vs. the running time on Fruit~\cite{hu2017finding} dataset.}
    \label{fig:time}
\end{figure}

\begin{table}[h]
    \centering
    %\huge
    %\renewcommand{\arraystretch}{1.8}
    \resizebox{\columnwidth}{!}{
    \begin{tabular}{cc|cccccccccc}
    \toprule
        % Dataset & Clustering & CLIP$_\text{image}$ & ~ & CLIP$_\text{GPT}$ & ~ & CLIP$_\text{label}$  & ~ & \sysname{}$_{\alpha, \beta=0}$ & ~ & \sysname{}$_{\alpha=0}$ & ~ & \sysname{}$_{\beta=0}$ & ~ & \sysname{} & ~ \\ 
        % ~ & ~ & NMI & RI & NMI & RI & NMI & RI & NMI & RI & NMI & RI & NMI & RI & NMI & RI \\ 
        % \midrule
        \iffalse
        Dataset & Clustering & \sysname{}$_{\alpha, \beta=0}$ & ~ & \sysname{}$_{\alpha=0}$ & ~ & \sysname{}$_{\beta=0}$ & ~ & \sysname{} & ~ \\ 
        \fi
        
        \multirow{2}{*}{Dataset} & \multirow{2}{*}{Clustering} & \multicolumn{2}{c}{\sysname{}$_\text{wo{CR}}$} & \multicolumn{2}{c}{\sysname{}$_\text{woC}$} & \multicolumn{2}{c}{\sysname{}$_\text{woR}$} & \multicolumn{2}{c}{\sysname{}$_\text{MSE}$} & \multicolumn{2}{c}{\sysname{}}\\
        
        ~ & ~ & NMI & RI & NMI & RI & NMI & RI & NMI & RI & NMI & RI \\ 
        \midrule
        \multirow{2}{*}{ALOI~\cite{geusebroek2005amsterdam}} & Color & 0.9632 & 0.9829 & 1.0000 & 1.0000 & 0.9843 & 0.9906 & 1.0000 & 1.0000 & 1.0000 & 1.0000 \\ 
        ~ & Shape & 1.0000 & 1.0000 & 1.0000 & 1.0000 & 1.0000& 1.0000 & 1.0000 & 1.0000 & 1.0000 & 1.0000 \\ 
        \midrule
        \multirow{2}{*}{Fruit~\cite{hu2017finding}} & Color & 0.7634 & 0.8432 & 0.8212 & 0.9274 & 0.8169 & 0.9198 & 0.8479 & 0.9296 & \textcolor{blue}{\textbf{0.8619}} & \textcolor{blue}{\textbf{0.9526}} \\ 
        ~ & Species & 1.0000 & 1.0000 & 1.0000 & 1.0000 & 1.0000 & 1.0000 & 1.0000 & 1.0000 & 1.0000 & 1.0000 \\ 
        \midrule
        \multirow{2}{*}{Fruit360~\cite{DBLP:conf/inns-dlia/YaoLRH23}} & Color & 0.5634 & 0.7650 & 0.6209 & 0.7825 & 0.6134 & 0.8036 & 0.6082 & 0.7943 & \textcolor{blue}{\textbf{0.6239}} & \textcolor{blue}{\textbf{0.8243}} \\ 
        ~ & Species & 0.5077 & 0.7368 & 0.5137 & 0.7436 & 0.5176 & 0.7363 & 0.5199 & 0.7428 & \textcolor{blue}{\textbf{0.5284}} & \textcolor{blue}{\textbf{0.7582}} \\ 
        \midrule
        \multirow{2}{*}{Card~\cite{DBLP:conf/inns-dlia/YaoLRH23}} & Order & 0.1928 & 0.8136 & 0.3560 & 0.8458 & 0.3518 & 0.8458 & 0.3605 & 0.8509 & \textcolor{blue}{\textbf{0.3653}} & \textcolor{blue}{\textbf{0.8587}} \\ 
        ~ & Suits & 0.2374 & 0.6271 & 0.2691 & 0.6632 & 0.2481 & 0.6104 & 0.2550 & 0.6596 & \textcolor{blue}{\textbf{0.2734}} & \textcolor{blue}{\textbf{0.7039}} \\ 
        \midrule
        \multirow{4}{*}{CMUface~\cite{gunnemann2014smvc}} & Emotion & 0.1692 & 0.6169 & 0.1717 & 0.6233 & 0.1709 & 0.6662 & 0.1711 & 0.6843 & \textcolor{blue}{\textbf{0.1786}} & \textcolor{blue}{\textbf{0.7105}} \\ 
        ~ & Glass & 0.3107 & 0.6902 & 0.3265 & 0.7130 & 0.3194 & 0.6908 &0.3362 & 0.7039 & \textcolor{blue}{\textbf{0.3402}} & \textcolor{blue}{\textbf{0.7068}} \\ 
        ~ & Identity & 0.5632 & 0.8236 & 0.6236 & 0.8368 & 0.6042 & 0.8273 &0.6396 & 0.8941 & \textcolor{blue}{\textbf{0.6625}} & \textcolor{blue}{\textbf{0.9496}} \\ 
        ~ & Pose & 0.4361 & 0.6407 & 0.4556 & 0.6492 & 0.4405 & 0.6507 &0.4398 & 0.6479 & \textcolor{blue}{\textbf{0.4693}} & \textcolor{blue}{\textbf{0.6624}} \\ 
        \midrule
        \multirow{2}{*}{Stanford Cars~\cite{krause20133d}} & Color & 0.5933 & 0.7832 & 0.6834 & 0.8665 & 0.6942 & 0.8930 & 0.7114 & 0.9105 & \textcolor{blue}{\textbf{0.7360}} & \textcolor{blue}{\textbf{0.9193}} \\ 
        ~ & Brand & 0.5562 & 0.7993 & 0.6388 & 0.8263 & 0.6207 & 0.7931 &0.6287 & 0.8176 & \textcolor{blue}{\textbf{0.6355}} & \textcolor{blue}{\textbf{0.8399}} \\ 
        \midrule
        \multirow{2}{*}{Flowers~\cite{nilsback2008automated}} & Color & 0.5795 & 0.7719 & 0.5836 & 0.7838 & 0.6133 & 0.7990 & 0.6211 & 0.7936 & \textcolor{blue}{\textbf{0.6426}} & \textcolor{blue}{\textbf{0.7984}} \\ 
        ~ & Species & 0.5699 & 0.7604 & 0.5737 & 0.7836 & 0.5905 & 0.8012 & 0.5842 & 0.7897 & \textcolor{blue}{\textbf{0.6013}} & \textcolor{blue}{\textbf{0.8103}} \\ 
        \bottomrule
    \end{tabular}
    }
     \caption{Components in Multi-MaP. The significantly best results with 95\% confidence are in bold.}
    \label{tab:ablation}
\end{table}

\subsection{Ablation Study}

To validate the effectiveness of \sysname{}, we compare four variants of \sysname{} that are removing the reference word constraint, removing the concept-level constraint, removing both constraints and implementing reference constraint with a high-level concept provided by a user, denoted as \sysname{}$_\text{woR}$, \sysname{}$_\text{woC}$, \sysname{}$_\text{woCR}$ and \sysname{}$_\text{MSE}$, respectively. 
The results are shown in Table~\ref{tab:ablation}. The proposed method achieved the best results, while the method that removed both reference word constraint and concept-level constraint performed the worst. This also shows that the proposed reference word constraint and concept-level constraint play an important role in the model. Moreover, \sysname{} performs better than \sysname{}$_\text{MSE}$, suggesting Multi-MaP can benefit from multiple concepts through the contrastive learning process.

\iffalse
\section{Rationale}
\label{sec:rationale}
% 
Having the supplementary compiled together with the main paper means that:
% 
\begin{itemize}
\item The supplementary can back-reference sections of the main paper, for example, we can refer to \cref{sec:intro};
\item The main paper can forward reference sub-sections within the supplementary explicitly (e.g. referring to a particular experiment); 
\item When submitted to arXiv, the supplementary will already included at the end of the paper.
\end{itemize}
% 
To split the supplementary pages from the main paper, you can use \href{https://support.apple.com/en-ca/guide/preview/prvw11793/mac#:~:text=Delete%20a%20page%20from%20a,or%20choose%20Edit%20%3E%20Delete).}{Preview (on macOS)}, \href{https://www.adobe.com/acrobat/how-to/delete-pages-from-pdf.html#:~:text=Choose%20%E2%80%9CTools%E2%80%9D%20%3E%20%E2%80%9COrganize,or%20pages%20from%20the%20file.}{Adobe Acrobat} (on all OSs), as well as \href{https://superuser.com/questions/517986/is-it-possible-to-delete-some-pages-of-a-pdf-document}{command line tools}.
\fi
%\input{sec/X_suppl}
% WARNING: do not forget to delete the supplementary pages from your submission 
% \input{sec/X_suppl}

\end{document}